\documentclass{article}


\usepackage[final]{neurips_2023}




\usepackage[utf8]{inputenc} 
\usepackage[T1]{fontenc}    
\usepackage{hyperref}       

\hypersetup{
    colorlinks=true,
    linkcolor=magenta,
    citecolor =cyan,
    filecolor=magenta,      
    urlcolor=magenta,
}

\usepackage{url}            
\usepackage{booktabs}       
\usepackage{amsfonts}       
\usepackage{nicefrac}       
\usepackage{microtype}      
\usepackage{xcolor}         
\usepackage{graphicx} 
\usepackage{wrapfig}

\title{Kernelized Reinforcement Learning with Order Optimal Regret Bounds}

%

\author{%
  Sattar Vakili\\
  MediaTek Research\\
  Cambridge, UK \\
  \texttt{sattar.vakili@mtkresearch.com} \\
  \And
  Julia Olkhovskaya \\
  TU Delft\thanks{Work was done when the author was affiliated with Vrije Universiteit Amsterdam.} \\
  Delft, Netherlands\\
  \texttt{julia.olkhovskaya@gmail.com} \\
}


\usepackage{algorithm}
\usepackage{algpseudocode}
\usepackage{amsmath}

\def\argmax{\text{argmax}}
\def\det{\text{det}}
\def\tr{\text{tr}}

\usepackage{bm}

\def\nn{\nonumber}

\def\Hc{\mathcal{H}}
\def\Sc{\mathcal{S}}
\def\Ac{\mathcal{A}}
\def\Zc{\mathcal{Z}}

\def\Rc{\mathcal{R}}
\def\Oc{\mathcal{O}}
\def\Fc{\mathcal{F}}

\def\Cc{\mathcal{C}}
\def\Bc{\mathcal{B}}
\def\Uc{\mathcal{U}}
\def\Qc{\mathcal{Q}}
\def\Vc{\mathcal{V}}
\def\Nc{\mathcal{N}}

\def\E{\mathbb{E}}
\def\I{\mathbf{1}}
\def\Rr{\mathbb{R}}
\def\Nn{\mathbb{N}}

\def\wb{\bm{w}}
\def\phib{\bm{\phi}}
\def\Phib{\bm{\Phi}}

\def\Qhat{\widehat{Q}}
\def\Oct{\tilde{\mathcal{O}}}
\def\pt{\tilde{p}}
\def\Vbar{\overline{V}}
\def\Qbar{\overline{Q}}
\def\Ebar{\overline{E}}

\def\varepsilonbar{\bar{\varepsilon}}

\newtheorem{lemma}{Lemma}
\newtheorem{proof}{Proof}
\newtheorem{theorem}{Theorem}

\newtheorem{definition}{Definition}

\newtheorem{assumption}{Assumption}

\newcommand{\real}{\mathbb{R}}

\def\argmax{\mathop{\mbox{ arg\,max}}}

\newcommand{\pa}[1]{\left(#1\right)}


\definecolor{PalePurp}{rgb}{0.66,0.57,0.66}

\begin{document}

\maketitle

\begin{abstract}
  Reinforcement learning (RL) has shown empirical success in various real world settings with complex models and large state-action spaces. The existing analytical results, however, typically focus on settings with a small number of state-actions or simple models such as linearly modeled state-action value functions. To derive RL policies that efficiently handle large state-action spaces with more general value functions, some recent works have considered nonlinear function approximation using kernel ridge regression. We propose $\pi$-KRVI, an optimistic modification of least-squares value iteration, when the state-action value function is represented by a reproducing kernel Hilbert space (RKHS). We prove the first order-optimal regret guarantees under a general setting. Our results show a significant polynomial in the number of episodes improvement over the state of the art. In particular, with highly non-smooth kernels (such as Neural Tangent kernel or some Mat{\'e}rn kernels) the existing results lead to trivial (superlinear in the number of episodes) regret bounds. We show a sublinear regret bound that is order optimal in the case of Mat{\'e}rn kernels where a lower bound on regret is known.  
\end{abstract}

\section{Introduction}

Reinforcement learning (RL) in real world often has to deal with large state action spaces and complex unknown models. While RL policies using complex function approximations have been empirically effective in various fields including gaming \citep{silver2016mastering,lee2018deep,vinyals2019grandmaster}, autonomous driving \citep{kahn2017uncertainty}, microchip design \citep{mirhoseini2021graph}, robot control \citep{kalashnikov2018scalable}, and algorithm search \citep{fawzi2022discovering}, little is known about theoretical performance guarantees in such settings. 
The analysis of RL algorithms has predominantly focused on simpler cases such as tabular or linear Markov decision processes (MDPs). In a tabular setting, a regret bound of $\Oct(\sqrt{H^3|\Sc\times\Ac|T})$ has been shown for optimistic state-action value learning algorithms~\citep[e.g., see,][]{jin2018q}, where $H$ is the length of episodes, $T$ is the number of episodes, and $\Sc$ and $\Ac$ are finite state and action spaces. This bound does not scale well when the size of state-action space grows large. Furthermore, when the model (the state-action value function or the transitions) admits a $d$-dimensional linear representation in some state-action features, a regret bound of $\Oct(\sqrt{H^3d^3T})$ is established~\citep{jin2019}, that scales with the dimension of the linear model rather than the cardinality of the state-action space.

Several recent studies have explored the utilization of  complex models with large state-action spaces. A very general model entails representing the state-action value function using a reproducing kernel Hilbert space (RKHS). This approach allows using kernel ridge regression to obtain confidence intervals, which facilitate the design and analysis of RL algorithms. The most significant contribution to this general RL problem is~\cite{yang2020provably},\footnote[2]{Also, see the extended version on \emph{arXiv}~\citep{yang2020}.} that provides regret guarantees for an optimistic least-squares value iteration (LSVI) algorithm, referred to as kernel optimistic least-squares value iteration (KOVI). The main assumption is that the state-action value function can be represented using the RKHS of a known kernel $k$. The regret bounds reported in~\cite{yang2020provably} scale as $\Oct\left(H^2\sqrt{ \left(\Gamma(T)+\log \Nc(\epsilon)\right)\Gamma(T)T}\right)$, with $\epsilon=\frac{H}{T}$, where $\Gamma(T)$ and $\Nc(\epsilon)$ are two kernel related complexity terms, respectively, referred to as maximum information gain and $\epsilon$-covering number of the class of state-action value functions. The definitions are given in Section~\ref{sec:anal}.
Both complexity terms are determined using the spectrum of the kernel. While for smooth kernels, characterized by exponentially decaying Mercer eigenvalues, such as Squared Exponential kernel, $\Gamma(T)$ and $\log\Nc(\frac{H}{T})$ are logarithmic in $T$, for more general kernels with greater representation capacity, these terms may grow polynomially in $T$, possibly making the regret bound trivial (superlinear). 

To have a better understanding of the existing result, let $\{\sigma_m>0\}_{m=1}^{\infty}$ denote the Mercer eigenvalues of the kernel $k$ in a decreasing order.
Also, let $\{\phi_m\}_{m=1}^{\infty}$ denote the corresponding eigenfeatures. Refer to Section~\ref{sec:KRR} for details.  
The kernel $k$ is said to have a polynomial eigendecay when $\sigma_m$ decay at least as fast as $m^{-p}$ for some $p>1$. The polynomial eigendecay profile satisfies for many kernels of practical and theoretical interest such as Mat{\'e}rn family of kernels~\citep{borovitskiy2020matern} and the Neural Tangent (NT) kernel~\citep{arora2019exact}. For a Mat{\'e}rn kernel with smoothness parameter $\nu$ on a $d$-dimensional domain, $p=\frac{2\nu+d}{d}$~\citep[e.g., see,][]{janz2020bandit}. For a NT kernel with $s-1$ times differentiable activations, $p=\frac{2s-1+d}{d}$~\citep{vakili2021uniform}. In~\cite{yang2020provably}, the regret bound is specialized for the class of kernels with polynomially decaying eigenvalues, by bounding the complexity terms based on the kernel spectrum. However, the reported regret bound is sublinear in $T$ only when the kernel eigenvalues decay very fast. 
In particular, let $\pt=p(1-2\eta)$, where for $\eta\ge0$, $\sigma_m^{\eta}\phi_m$ is uniformly bounded. 
Then, \cite{yang2020provably}, Corollary $4.4$ reports a regret bound of $\Oct(T^{\xi^*+\kappa^*+\frac{1}{2}})$, with  
\begin{equation}\label{eq:kap}
\kappa^* = \max\{\xi^*, \frac{2d+p+1}{(d+p)(\pt-1)}, \frac{2}{\pt-3}\}, ~~~~~\xi^*=\frac{d+1}{2(p+d)}.
\end{equation}

The regret bound $\Oct(T^{\xi^*+\kappa^*+\frac{1}{2}})$ is sublinear only when $p$ and $\pt$ are sufficiently large. That, at least, requires $2\xi^*< \frac{1}{2}$, implying $p>d+2$, when $\pt$ is also sufficiently large. For instance, for Mat{\'e}rn kernels, this requirement can be expressed as $\nu>\frac{d(d+1)}{2}$, when $\frac{(2\nu+d)(1-2\eta)}{d}$ is sufficiently large.



\textbf{Special case of bandits.} A similar issue existed in the simpler problem of kernelized bandits, corresponding to the special case where $H=1, |\Sc|=1$. Specifically, the $\Oct(\Gamma(T)\sqrt{T})$ regret bounds reported for optimistic sampling~\citep[][GP-UCB]{srinivas2010}, as well as for Thompson sampling~\citep[][GP-TS]{chowdhury2017kernelized} are also trivial (superlinear) when $\Gamma(T)$ grows faster than $\sqrt{T}$. It remains an open problem whether the suboptimal performance guarantees for these two algorithms is a fundamental shortcoming or an artifact of the proof. This observation is formalized as an open problem on the online confidence intervals for RKHS elements in~\cite{vakili2021open}. 
For the kernelized bandits problem, \cite{scarlett2017lower} proved lower bounds on regret in the case of Mat{\'e}rn family of kernels. In particular, they proved an $\Omega(T^{\frac{\nu+d}{2\nu+d}})$ lower bound on regret of any bandit algorithm. Several recent algorithms, different from GP-UCB and GP-TS, have been developed to alleviate the suboptimal and superlinear regret bounds in kernelized bandits and obtain an $\Oct(\sqrt{\Gamma(T)T})$ regret bound~\citep{li2021gaussian, salgia2021domain}, that matches the lower bound in the case of the Mat{\'e}rn family of kernels, up to logarithmic factors. The \emph{Sup} variations of the UCB algorithms also obtain the optimal regret bound in the contextual kernel bandit setting with finite actions~\citep{valko2013finite}.

\textbf{Main contribution.} The RL setting presents a greater level of complexity compared to the bandit setting due to the Markovian dynamics. None of the solutions in~\cite{li2021gaussian,salgia2021domain,  valko2013finite} seem appropriate in the presence of MDP dynamics, thereby leaving the question of order optimal regret bounds largely open. In this work, we leverage the scaling of the kernel spectrum with the size of the domain to improve the regret bounds. We consider kernels with polynomial eigendecay on a hypercubical domain with side length $\rho$, where eigenvalues scale with $\rho^\alpha$ for some $\alpha>0$. See Definition~\ref{def:pol}. This encompasses a large class of common kernels, including the Mat{\'e}rn family, for which, $\alpha=2\nu$. The hypercube domain assumption is a technical
formality that can be relaxed to other regular compact subsets of $\Rr^d$. 
In Section~\ref{sec:pol}, we propose a domain partitioning kernel ridge regression based least-squares value iteration policy ($\pi$-KRVI) that achieves sublinear regret of $\Oct(H^2T^{\frac{d+\alpha\mathbin{/}2}{d+\alpha}})$ for kernels introduced in Definition~\ref{def:pol}.
This is the first sublinear regret bound under such a general stetting. Moreover, with Mat{\'e}rn kernels, our regret bound matches the $\Omega(T^{\frac{\nu+d}{2\nu+d}})$ lower bound reported in in~\cite{scarlett2017lower} for the special case of kernelized bandits, up to a logarithmic factor.

Our proposed policy, $\pi$-KRVI, is based on least-squares value iteration (similar to KOVI,~\cite{yang2020provably}). However, in order to effectively utilize the confidence intervals from kernel ridge regression, $\pi$-KRVI creates a partitioning of the state-action domain and builds the confidence intervals only based on the observations within the same partition element. 
The domain partitioning allows us to leverage the scaling of the kernel eigenvalues with respect to the domain size.  
The inspiration for this idea is drawn from $\pi$-GP-UCB algorithm introduced in~\cite{janz2020bandit} for kernelized bandits. 
In comparison to~\cite{janz2020bandit}, $\pi$-KRVI and its analysis present greater complexity due to the Markovian dynamics in the MDP setting. Furthermore, we provide a finer analysis that significantly improves the results compared to~\cite{janz2020bandit}. Although~\cite{janz2020bandit} obtained sublinear regret guarantees of $\Oct(T^{\frac{2\nu+d(2d+3)}{4\nu+d(2d+4)}})$ in the kernelized bandit setting with Mat{\'e}rn kernel, there still remained a polynomial in $T$ gap between their regret bounds and the lower bound reported in~\cite{scarlett2017lower}. As a consequence of our results, we also close this gap. 

There are several novel contributions in our analysis that lead to the improved and order optimal regret bounds. We establish confidence intervals for kernel ridge regression that apply uniformly to all functions in the state-action value function class (Theorem~\ref{thm:con_int}).
A similar confidence interval was given in~\cite{yang2020provably}. We however provide flexibility with respect to setting the parameters of the confidence interval, that eventually contributes to the improved regret bounds, with
a proper choice of parameters. 
We also derive bounds on the maximum information gain (Lemma~\ref{lem:mig}) and the function class covering number (Lemma~\ref{lem:cov_num}), taking into consideration the size of the state-action domain. These bounds are important for the analysis of our domain partitioning policy which effectively controls the number of observations utilized in kernel ridge regression by partitioning the domain into subdomains of diminishing size. 
These intermediate results may also be of general interest in similar problems.

The $\pi$-KRVI policy enjoys an efficient runtime, polynomial in $T$, and linear in $|\Ac|$, similar to the runtime of KOVI~\citep{yang2020provably}. The dependency of the runtime on $|\Ac|$ limits the scope of the policy to finite $\Ac$, while allowing a continuous $\Sc$ (with $|\Sc|$ infinite). 
The assumption of finite $\Ac$ can be relaxed, provided there is an efficient optimizer of a certain state-action value function.
See the details in Section~\ref{sec:KRVI}.  






\textbf{Other related work.}
There is an extensive literature on the analysis of RL policies which do not rely on a generative model or an exploratory behavioral policy. The literature has 
primarily focused on the tabular setting \citep{jin2018q, AuerRL08, abs-1205-2661}. The domain of potential applications for this setting is very limited, as in many real world problems, the state-action space is very large or even infinite. 
In response to this, recent literature has placed a notable emphasis on employing function approximation in RL, particularly within the context of generalized linear settings. This approach involves representing the value function or transition model through a linear transformation to a well-defined feature mapping. Important contributions include the work of \cite{jin2019,yao2014pseudo},  as well as subsequent studies by \cite{russo2019worst, zanette2020frequentist, pmlr-v119-zanette20a, neu2020unifying, yang2020reinforcement}. Furthermore, there have been several efforts to extend these techniques to a kernelized setting, as explored in \cite{yang2020provably, yang2020reinforcement, chowdhury2019online, yang2020function, domingues2021kernel}. These works are also inspired by methods originally designed for linear bandits \citep{abbasi2011improved, agrawal2013thompson}, as well as kernelized bandits \citep{srinivas2010gaussian, valko2013finite, chowdhury2017kernelized}. However, all known regret bounds in the RL setting \citep{yang2020provably, yang2020reinforcement, chowdhury2019online, yang2020function, domingues2021kernel} are not order optimal. We compare our regret bounds with the state of the art reported in~\cite{yang2020provably}. 
A similar issue existed for classic kernelized bandit algorithms. A detailed discussion can be found in \cite{vakili2021open}. The authors in
\cite{yang2020reinforcement} considered finite state-actions under a kernelized MDP model where the transition model can be directly estimated. That is different from the setting considered in our work and~\cite{yang2020provably}.

\section{Preliminaries and Problem Formulation}

In this section, we overview the background on episodic MDPs and kernel ridge regression. 

\subsection{Episodic Markov Decision Processes}

An episodic MDP can be described by the tuple $M=(\Sc,\Ac, H, P, r)$, where $\Sc$ is the state space, $\Ac$ is the action space, the integer $H$ is the length of each episode, $r=\{r_h\}_{h=1}^H$ are the reward functions and $P=\{P_h\}_{h=1}^H$ are the transition probability distributions.\footnote{We intentionally do note use the standard term transition kernel for $P_h$, to avoid confusion with the term kernel in kernel-based learning.}
We use the notation $\Zc=\Sc\times\Ac$ to denote the state-action space. For each $h\in[H]$, the reward $r_h: \Zc\rightarrow [0,1]$ is the reward function at step $h$, which is supposed to be deterministic for simplicity, and $P_h(\cdot|s,a)$ is the transition probability distribution on $\Sc$ for the next state from state-action pair~$(s,a)$. The choice of deterministic rewards allows us to concentrate on the core complexities of the problem, and should not be regarded as a limitation. Both the framework and results can be readily extended to a setting with random rewards. 

A policy $\pi=\{\pi_h\}_{h=1}^H$, at each step $h$, determines the (possibly random) action $\pi_h:\Sc\rightarrow \Ac$ taken by the agent at state $s$.  
At the beginning of each episode $t=1,2,\cdots$, the environment picks an arbitrary state $s_1^t$. The agent determines a policy $\pi^t=\{\pi_h^t\}_{h=1}^H$. Then, at each step $h\in[H]$, the agent observes the state $s_h^t\in\Sc$, picks an action $a_h^t=\pi_h^t(s_h^t)$ and observes the reward $r_h(s_h^t, a_h^t)$. The new state $s_{h+1}^t$ then is drawn from the transition distribution $P_h(\cdot|s_h^t, a_h^t)$. The episode ends when the agent receives the final reward $r_H(s_H^t,a_H^t)$.  

The goal is to find a policy $\pi$ that maximizes the expected total reward in the episode, starting at step $h$, i.e., the value function defined as
\begin{equation}
V^{\pi}_h(s) = \E\left[\sum_{h'=h}^Hr_{h'}(s_{h'},a_{h'})\bigg|s_{h}=s\right],  ~~~\forall s\in\Sc, h\in[H],
\end{equation}
where the expectation is taken with respect to the randomness in the trajectory $\{(s_h,a_h)\}_{h=1}^H$ obtained by the policy $\pi$. It can be shown that under mild assumptions (e.g., continuity of $P_h$, compactness of $\Zc$, and boundedness of $r$) there exists an optimal policy $\pi^{\star}$ which attains the maximum possible value of $V^{\pi}_{h}(s)$ at every step and at every state~\citep[e.g., see,][]{puterman2014markov}. We use the notation
$
V_{h}^{\star}(s) = \max_{\pi}V_h^{\pi}(s), ~\forall s\in\Sc, h\in[H]
$.
By definition $V^{\pi^{\star}}_h=V_{h}^{\star}$.
For a value function $V:\Sc\rightarrow [0,H]$, we define the following notation
\begin{equation}
    [P_hV](s,a) := \E_{s'\sim P_h(\cdot|s,a)}[V(s')].  
\end{equation}

We also define the state-action value function $Q^{\pi}_h:\Zc\rightarrow [0,H]$ as follows.
\begin{equation}
Q_h^{\pi}(s,a) = \E_{\pi}\left[
\sum_{h'=h}^Hr_{h'}(s_{h'},a_{h'})\bigg|s_h=s, a_h=a
\right],
\end{equation}
where the expectation is taken with respect to the randomness in the trajectory $\{(s_h,a_h)\}_{h=1}^H$ obtained by the policy $\pi$.
The Bellman equation associated with a policy $\pi$ then is represented as
\begin{equation}
Q_h^{\pi}(s,a) = r_h(s,a) + [P_hV^{\pi}_{h+1}](s,a), ~~~ V_h^{\pi}(s) = \E_{\pi}[Q_h^{\pi}(s, \pi_h(s))], ~~~ V_{H+1}^{\pi} := 0,
\end{equation}

where the expectation is taken with respect to the randomness in the policy $\pi$. The Bellman optimality equation is also given as
$
Q_h^{\star}(s,a) = r_h(s,a) + [P_hV^{\star}_{h+1}](s,a), V_h^{\star}(s) = \max_{a}Q_h^{\star}(s,a)$, 
$ V_{H+1}^{\star} :=~0
$.
The performance of a policy $\pi^t$ is measured in terms of the loss in the value function, referred to as \emph{regret}, denoted by $\Rc(T)$ in the following definition
\begin{equation}
\Rc(T) = \sum_{t=1}^T(V^{\star}_1(s_1^t) - V^{\pi^t}_1(s_1^t)).
\end{equation}
Recall that $\pi^t$ is the policy executed by the agent at episode $t$, where $s_1^t$ is the initial state in that episode determined by the environment. 

\subsection{Kernel Ridge Regression}\label{sec:KRR}

We assume that the state-action value functions belong to a known reproducing kernel Hilbert space (RKHS). See Assumption~\ref{ass:RKHS_norm} and Lemma~\ref{lem:bel_RKHS} for the formal statement. This is a very general assumption, considering that the RKHS of common kernels can approximate almost all continuous functions on the compact subsets of $\Rr^d$~\citep{srinivas2010}. Consider a positive definite kernel $k: \Zc\times\Zc\rightarrow \Rr$.  
Let $\Hc_k$ be the RKHS induced by $k$, where $\Hc_k$ contains a family of functions defined on $\Zc$. Let $\langle\cdot, \cdot\rangle_{\Hc_k}: \Hc_k \times \Hc_k \rightarrow \mathbb{R}$ and $\|\cdot\|_{\Hc_k}: \Hc_k \rightarrow \mathbb{R}$ denote the inner product and the norm of $\Hc_k$, respectively.
The reproducing property implies that for all $f\in\Hc_k$, and $z\in\Zc$, $\langle f, K(\cdot,z)\rangle_{\mathcal{H}_k}=f(z)$. 
Without loss of generality, we assume $k(z,z)\leq 1$ for all~$z$. Mercer theorem implies, under certain mild conditions, $k$ can be represented using an infinite dimensional feature map: 
\begin{eqnarray}\label{eq:Mercer}
k(z,z')=\sum_{m=1}^{\infty}\sigma_m\phi_m(z)\phi_m(z'),
\end{eqnarray}
where $\sigma_m>0$, and $\sqrt{\sigma_m}\phi_m\in\Hc_k$ form an orthonormal basis of $\Hc_k$. In particular, any $f\in\Hc_k$ can be represented using this basis and wights $w_m\in\Rr$ as
\begin{eqnarray}
f =\sum_{m=1}^{\infty}w_m\sqrt{\sigma_m}\phi_m,
\end{eqnarray}
where $\|f\|^2_{\Hc_k}=\sum_{m=1}^\infty w_m^2$.
A formal statement and the details are provided in Appendix~\ref{appx:rkhs}. We refer to $\sigma_m$ and $\phi_m$ as (Mercer) eigenvalues and eigenfeatures of $k$, respectively. 

Kernel-based models provide powerful predictors and uncertainty estimators which can be leveraged to guide the RL algorithm. In particular, consider a fixed unknown function $f\in\Hc_k$. Consider a set $Z^t=\{z^i\}_{i=1}^t\subset \Zc$ of $t$ inputs. Assume $t$ noisy observations $\{Y(z^i)=f(z^i)+\varepsilon^i\}_{i=1}^t$ are provided, where $\varepsilon^i$ are independent zero mean noise terms. Kernel ridge regression provides the following predictor and uncertainty estimate, respectively~\citep[see, e.g.,][]{scholkopf2002learning},
\begin{eqnarray}
\mu^{t,f}(z) &=& k^{\top}_{Z^t}(z)(K_{Z^t}+\lambda^2 I^t)^{-1}Y_{Z^t},\nonumber\\
(b^t(z))^2&=&k(z,z)-k^{\top}_{Z^t}(z)(K_{Z^t}+\lambda^2 I)^{-1}k_{Z^t}(z), \label{eq:KRR}
\end{eqnarray}
where $k_{Z^t}(z)=[k(z,z^1),\dots, k(z,z^t)]^{\top}$ is a $t\times 1$ vector of the kernel values between $z$ and observations, $K_{Z^t}=[k(z^i,z^j)]_{i,j=1}^t$ is the $t\times t$ kernel matrix, $Y_{Z^t}=[Y(z^1), \dots, Y(Z^t)]^{\top}$ is the $t\times 1$ observation vector, $I$ is the identity matrix of dimensions $t$, and $\lambda>0$ is a free regularization parameter.
The predictor and uncertainty estimate could be interpreted as posterior mean and variance of a surrogate centered Gaussian process (GP) model with covariance $k$, and zero mean Gaussian noise with variance $\lambda^2$~\citep[e.g., see,][]{williams2006gaussian}.

\subsection{Technical Assumption}

We assume that the reward functions $\{r_{h}\}_{h=1}^H$ and the transition probability distributions $P_h(s'|\cdot,\cdot)$ belong to the $1$-ball of the RKHS. We use the notation $\Bc_{k,R}=\{f:\|f\|_{\Hc_k}\le R\}$ to denote the $R$-ball of the RKHS. 

\begin{assumption}\label{ass:RKHS_norm}
We assume
\begin{equation}
r_h(\cdot, \cdot), P_h(s'|\cdot, \cdot) \in\Bc_{k,1}, ~~~~~\forall h\in[H], ~ \forall s'\in\Sc.
\end{equation}

\end{assumption}

This is a mild assumption considering the generality of RKHSs, that is also supposed to hold in~\cite{yang2020provably}. Similar assumptions are made in linear MDPs which are significantly more restrictive~\citep[e.g., see,][]{jin2019}. 

An immediate consequence of Assumption~\ref{ass:RKHS_norm} is that for any integrable $V:\Sc\rightarrow [0,H]$, $r_h+[P_{h}V_{h+1}]\in \Bc_{k,H+1}$. This is formalized in the following lemma. 

\begin{lemma}\label{lem:bel_RKHS}
Consider any integrable $V:\Sc\rightarrow[0,H]$. Under Assumption~\ref{ass:RKHS_norm}, we have
\begin{equation}\label{eq:RKHS_norm_Q}
r_h+[P_{h}V]\in \Bc_{k,H+1}.
\end{equation}
\end{lemma}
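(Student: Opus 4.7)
The plan is to reduce the claim to bounding the two summands $r_h$ and $[P_h V]$ separately in the RKHS norm, via the triangle inequality
\[
\|r_h + [P_h V]\|_{\Hc_k} \le \|r_h\|_{\Hc_k} + \|[P_h V]\|_{\Hc_k}.
\]
The first term is bounded by $1$ directly from Assumption~\ref{ass:RKHS_norm}, so the real work is to show $\|[P_h V]\|_{\Hc_k} \le H$. From that point, summing $1 + H$ yields membership in $\Bc_{k,H+1}$.

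For the transition term, I would exploit the reproducing property applied to the function $P_h(s'|\cdot,\cdot)$, which lies in $\Bc_{k,1}$ for every $s'\in\Sc$ by assumption. Namely, for each $z=(s,a)\in\Zc$,
\[
P_h(s'|z) = \biprod{P_h(s'|\cdot,\cdot)}{k(\cdot,z)}_{\Hc_k},
\]
and therefore
\[
[P_h V](z) \;=\; \int_{\Sc} V(s')\, P_h(s'|z)\,ds' \;=\; \Biprod{\int_{\Sc} V(s')\,P_h(s'|\cdot,\cdot)\,ds'}{k(\cdot,z)}_{\Hc_k},
\]
where I interchange the integral over $s'$ with the RKHS inner product via Bochner integration. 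By the reproducing property in the reverse direction, this identifies $[P_h V]$ as the element $g_V := \int_{\Sc} V(s')\,P_h(s'|\cdot,\cdot)\,ds' \in \Hc_k$.

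To bound $\|g_V\|_{\Hc_k}$, I would invoke Minkowski's integral inequality (the triangle inequality for Bochner integrals):
\[
\|g_V\|_{\Hc_k} \;\le\; \int_{\Sc} |V(s')| \cdot \|P_h(s'|\cdot,\cdot)\|_{\Hc_k}\,ds',
\]
and then control the integrand using $V \le H$ together with the pointwise bound $\|P_h(s'|\cdot,\cdot)\|_{\Hc_k} \le 1$ and the fact that $P_h(\cdot|z)$ is a probability measure, which normalizes the contribution of the transition density and yields $\|g_V\|_{\Hc_k} \le H$. Combining the two bounds completes the proof.

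The main obstacle is the technical justification of the Bochner-integral step: one must verify that the $\Hc_k$-valued map $s' \mapsto V(s')\,P_h(s'|\cdot,\cdot)$ is strongly measurable and Bochner integrable, which requires measurability of $V$ (built into the assumption that $V$ is integrable) and measurability of $s'\mapsto P_h(s'|\cdot,\cdot)$ into $\Hc_k$; these follow from the regularity of $P_h$ as a transition probability whose sections live in a fixed ball of $\Hc_k$. A secondary subtlety is choosing the integration to be compatible with the probabilistic structure of $P_h(\cdot|z)$ so that the constant in the norm bound is indeed $H$ rather than $H\cdot\mu(\Sc)$; this is where the interpretation of $P_h(\cdot|z)$ as a probability measure is essential.
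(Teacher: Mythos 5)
The paper does not prove this lemma itself; it defers to an external reference (Lemma~3 of \texttt{yeh2023sample}), so there is no in-paper proof to compare against. Your overall strategy — triangle inequality, identifying $[P_hV]$ with the Bochner integral $g_V=\int_{\Sc}V(s')P_h(s'|\cdot,\cdot)\,ds'$ via the reproducing property, and then applying Minkowski's integral inequality — is the right one and is almost certainly how the cited proof goes. The bound $\|r_h\|_{\Hc_k}\le 1$ and the reduction to $\|[P_hV]\|_{\Hc_k}\le H$ are fine.

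There is, however, a genuine gap in the last step. After Minkowski you have
$\|g_V\|_{\Hc_k}\le \int_{\Sc}|V(s')|\,\|P_h(s'|\cdot,\cdot)\|_{\Hc_k}\,ds' \le H\int_{\Sc}\|P_h(s'|\cdot,\cdot)\|_{\Hc_k}\,ds'$,
and you claim the remaining integral is controlled because $P_h(\cdot|z)$ is a probability measure. That normalization, $\int_{\Sc}P_h(s'|z)\,ds'=1$ for each fixed $z$, is a statement about the integral of the \emph{function values} of the densities; it says nothing about the integral of their \emph{RKHS norms}. In fact it works against you: since $k(z,z)\le 1$ implies $\|P_h(s'|\cdot,\cdot)\|_{\Hc_k}\ge \sup_z|P_h(s'|z)|$, the probability normalization only shows $\int_{\Sc}\|P_h(s'|\cdot,\cdot)\|_{\Hc_k}\,ds'\ge 1$, i.e., it is a lower bound, not an upper bound. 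The correct upper bound comes from a different pair of facts: Assumption~\ref{ass:RKHS_norm} gives the pointwise-in-$s'$ bound $\|P_h(s'|\cdot,\cdot)\|_{\Hc_k}\le 1$, and the paper's standing convention $\Zc\subseteq[0,1]^d$ gives $\mathrm{vol}(\Sc)\le 1$ for the Lebesgue reference measure, whence $\int_{\Sc}\|P_h(s'|\cdot,\cdot)\|_{\Hc_k}\,ds'\le \mathrm{vol}(\Sc)\le 1$ and $\|g_V\|_{\Hc_k}\le H$. Without the volume bound your argument would only yield $H\cdot\mathrm{vol}(\Sc)$, exactly the failure mode you flagged but did not actually resolve. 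With this correction (and the measurability caveats you already noted for the Bochner integral), the proof is complete.
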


See \cite[][Lemma~$3$]{yeh2023sample} for a proof. 


\section{Domain Partitioning Least-Squares Value Iteration Policy}\label{sec:pol}

A standard policy in episodic MDPs is the least-squares value iteration (LSVI), which computes an estimate $\Qhat_h^t$ for $Q^{\star}_h$ at each step $h$ of episode $t$, by recursively applying Bellman equation as discussed in the previous section. In addition, an exploration bonus term $b_h^t:\Zc\rightarrow \Rr$ is typically added leading to
\begin{equation}
Q_h^t= \min\{\Qhat_h^t + \beta b_h^t,~ H-h+1\}.
\end{equation}
The term $\Qhat_h^t + \beta b_h^t$ is an upper confidence bound on the state-action value function, that is inspired by the principle of \emph{optimism in the face of uncertainty}. Since the rewards are assumed to be at most $1$, the state-action value function at step $h$ is also bounded by $H-h+1$.
In episode $t$, then $\pi^t$ is the greedy policy with respect to $Q^t=\{Q_h^t\}_{h=1}^H$. Under Assumption~\ref{ass:RKHS_norm}, the estimate $\Qhat_{h}^t$, the parameter $\beta$ and the exploration bonus $b_h^t$ can all be designed using kernel ridge regression. 
Specifically, having the Bellman equation in mind, $\Qhat_h^t$ is the (kernel ridge) predictor for $r_h+[P_hV^t_{h+1}]$ using (possibly some of) the past $t-1$ observations $\{r_h(z_h^{\tau})+ V^t_{h+1}(s_{h+1}^\tau)\}_{\tau=1}^{t-1}$
at points $\{z_h^{\tau}\}_{\tau=1}^{t-1}$. 
Recall that 
$
\E\left[r_h(z_h^{\tau})+ V^t_{h+1}(s_{h+1}^\tau)  \right]  = r_h(z_h^{\tau})+ [P_hV^t_{h+1}](z_h^{\tau}),
$
where the expectation is taken with respect to $P_h(\cdot|z^{\tau}_h)$.
The observation noise $V^t_{h+1}(s_{h+1}^\tau)-[P_hV^t_{h+1}](z_h^{\tau})$ is due to random transitions and is bounded by $H-h\le H$. 


\subsection{Domain Partitioning}\label{sec:KRVI_part}
To overcome the suboptimal performance guarantees rooted in the online confidence intervals in kernel ridge regression, we introduce domain partitioning kernel ridge regression based least-squares value iteration ($\pi$-KRVI). The proposed policy partitions the state-action space $\Zc$ into subdomains and builds kernel ridge regression only based on the observations within each subdomain. By doing so, we obtain tighter confidence intervals, ultimately resulting in a tighter regret bound. To formalize this procedure, we consider the state-action space $\Zc\subset[0,1]^d$. Let $\Sc_h^t$, $h\in[H], t\in[T]$ be sets of hypercubes overlapping only at edges, covering the entire $[0,1]^d$. For any hypercube $\Zc'\in\Sc_h^t$, we use $\rho_{\Zc'}$ to denote the length of any of its sides, and $N^t_{h}(\Zc')$ to denote the number of observations at step $h$ in $\Zc'$ up to episode $t$: 
\begin{equation}
N^t_{h}(\Zc') = \sum_{\tau=1}^t\I\{(s_h^{\tau}, a_h^{\tau})\in \Zc'\}.
\end{equation}
For all $h\in[H]$, we initialize $\Sc_h^{1}=\{[0,1]^d\}$. At each episode $t$, for each step $h$, after observing a sample from $r_h+[P_hV_{h+1}^t]$ at $(s^{t}_h, a^{t}_h)$, we construct a new cover $\Sc_h^t$ as follows. We divide every element $\Zc'\in \Sc_h^{t-1}$ that satisfies  $\rho_{\Zc'}^{-\alpha} < |N^t_{h}(\Zc')|+1$, into two equal halves along each side, generating $2^d$  hypercubes.  The parameter $\alpha>0$ in the splitting rule is a constant specified in Definition~\ref{def:pol}. 
Subsequently, we define $\Sc_h^t$ as the set of newly created hypercubes and the elements of $\Sc_h^{t-1}$ that were not split. 

The construction of the cover sets described above ensures the number $N^t_{h}(\Zc')$ of observations within each cover element $\Zc'$ remains relatively small with respect to the size of $\Zc'$, while also controlling the total number $|\Sc_h^t|$ of cover elements. The key parameter managing this tradeoff is $\alpha$, which is carefully chosen to achieve an appropriate width for the confidence interval, as shown in Section~\ref{sec:anal}.

\subsection{$\pi$-KRVI}\label{sec:KRVI}

Our proposed policy, $\pi$-KRVI, is derived by adopting the precise structure of an optimistic LSVI, as described previously, where the predictor and the exploration bonus are designed based on kernel ridge regression only on cover elements. In particular, for $z\in\Zc$, let $\Zc^t_h(z)\in\Sc_h^t$ be the cover element at step $h$ of episode $t$ containing $z$.
Define $Z_h^t(z)=\{(s_h^{\tau}, a_h^{\tau})\in\Zc^t_h(z), \tau<t \}$ to be the set of past observations belonging to the same cover element as $z$. 
We then set
\begin{equation}
    \Qhat_h^t(z) = k^{\top}_{Z^t_h(z)}(z)(K_{Z^t_h(z)}+\lambda^2 I)^{-1} Y_{Z^t_h(z)},
\end{equation}

where $k_{Z^t_h(z)} = [k(z,z')]^{\top}_{z'\in Z^t_h(z)}$ is the kernel values between $z$ and all observations $z'$ in $Z_h^t(z)$, $K_{Z^t_h(z)}=[k(z',z'')]_{z',z''\in Z_h^t(z)}$ is the kernel matrix for observations in $Z_h^t(z)$, and $Y_{Z^t_h(z)}=[r_h(z')+V^t_{h+1}(s'_{h+1})]^{\top}_{z'\in Z^t_h(z)}$, where $s'_{h+1}$ is drawn from the transition distribution $P_h(\cdot|z')$, denotes the observation values for the observation points $z'\in Z^t_h(z)$. The vectors $k_{Z^t_h(z)}$ and $Y_{Z^t_h(z)}$ are $N_{h}^{t-1}(\Zc^t_h(z))$ dimensional column vectors, and $K_{Z^t_h(z)}$ and $I$ are $N_{h}^{t-1}(\Zc^t_h(z))\times N_{h}^{t-1}(\Zc^t_h(z))$ dimensional matrices. 

The exploration bonus is determined based on the uncertainty estimate of the kernel ridge regression model on cover elements defined as
\begin{equation}        
    b_h^t(z) = \left(
    k(z,z) - k^{\top}_{Z^t_h(z)}(z)(K_{Z^t_h(z)}+\lambda^2 I)^{-1}k_{Z^t_h(z)}(z)
    \right)^{\frac{1}{2}}.
\end{equation}

The policy $\pi$-KRVI then is the greedy policy with respect to
\begin{equation}\label{eq:Qfunc}
Q_h^t(z)= \min\{\Qhat_h^t(z) + \beta_T(\delta) b_h^t(z), H-h+1\}.
\end{equation}
Specifically, at step $h$ of episode $t$, the following action is chosen, after observing $s_h^t$,
\begin{equation}\label{eq:poli}
    a_h^t = \argmax_{a\in\Ac} Q_h^t(s_h^t,a).
\end{equation}

A pseudocode is provided in Algorithm~\ref{alg:cap}.



\begin{algorithm}
\caption{The $\pi$-KRVI Policy.}\label{alg:cap}
\begin{algorithmic}[1]
\State Input: $\lambda$, $\beta_T(\delta)$, $k$, $M=(\Sc,\Ac, H, P, r )$. 
\State For all $h\in[H]$, let $\Sc_h^{1}=\{[0,1]^d\}$.
\For{Episode $t=1,2, \dots, T$,}
\State Receive the initial state $s_1^{t}$.
\State Set $V_{H+1}^t(s)=0$, for all $s$. 

\For{step $h=H, \dots, 1$}
\State Obtain value functions $Q^t_h(z)$ as in (\ref{eq:Qfunc}).
\EndFor

\For{step $h=1,2, \dots, H$}
\State Take action $a_h^t\gets \argmax_{a\in\Ac}Q_h^t(s_h^t,a)$.
\State Observe the reward $r_h(s_h^t,a_h^t)$ and the next state $s_{h+1}^t$. 
\State Split any element $\Zc'\in \Sc^{t-1}_h$, for which $\rho_{\Zc'}^{-\alpha} < |N^t_{h}(\Zc')| + 1$ along the middle of each side, and obtain $\Sc^{t}_h$.
\EndFor

\EndFor
\end{algorithmic}
\end{algorithm}

The predictor $\Qhat_{h}^t$, the confidence interval width multiplier $\beta_T(\delta)$ and the exploration bonus $b_h^t$ are all designed using kernel ridge regression limited to the observations within cover elements given above. The parameter $\beta_T(\delta)$, in particular, is designed in a way that $Q_h^t$ is a $1-\delta$ upper confidence bound on $r_h+[P_hV_{h+1}^t]$. Using Theorem~\ref{thm:con_int} on the confidence intervals, we show that a choice of $\beta_T(\delta)=\Theta(H\sqrt{\log(\frac{TH}{\delta})})$ satisfies this requirement.

Figure~\ref{Fig:dom} demonstrates the domain partitioning used in $\pi$-KRVI on a $2$-dimensional domain. The colors represent the value of the target function. The observation points are expected to concentrate around the areas where the target function has a high value. As a result the domain is partitioned to smaller squares in that region.

\begin{figure}[h]
    \centering  \includegraphics[width=0.4\textwidth]{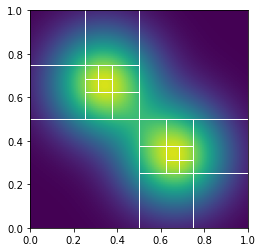}
    \caption{A $2$-dimensional domain partitioned into smaller squares.}
    \label{Fig:dom}
\end{figure}

\textbf{Runtime complexity.} The $\pi$-KRVI policy is also runtime efficient with a polynomial runtime complexity. In particular, an upper bound on the runtime of $\pi$-KRVI is $\Oc(HT^4+H|\Ac|T^3)$, that is similar to KOVI~\citep{yang2020provably}. However, analogous to~\citep{janz2020bandit}, we expect an improved runtime for $\pi$-KRVI in practice. In addition, the runtime can further improve in terms of $T$, utilizing sparse approximations of kernel ridge predictor and uncertainty estimate~\citep[e.g., see,][]{vakili2022improved}. The dependency of the runtime on $|\Ac|$ is due to the step given in Equation~\eqref{eq:poli}. If this optimization can be done efficiently over continuous domains, $\pi$-KRVI (also KOVI) could handle infinite number of actions. The assumption that the upper confidence bound index can be efficiently optimized over continuous domains is often made in the kernelized bandits ~\citep[e.g., see,][]{srinivas2010}.

\section{Main Results and Regret Analysis}\label{sec:anal}

In this section, we present our main results. In Theorem~\ref{the:main}, we establish an $\Oct(H^2T^{\frac{d+\alpha\mathbin{/}2}{d+\alpha}})$ regret bound for $\pi$-KRVI, for the class of kernels with polynomial eigendecay. 
We first prove bounds on maximum information gain and covering number of state-action value function class. Those enable us to present our uniform confidence interval for state-action value functions (Theorem~\ref{thm:con_int}), and subsequently the regret bound (Theorem~\ref{the:main}). 

\begin{definition}[Polynomial Eigendecay]\label{def:pol}
Consider the Mercer eigenvalues $\{\sigma_m\}_{m=1}^\infty$ of $k:\Zc\times\Zc\rightarrow\Rr$, given in Equation~\eqref{eq:Mercer}, in a decreasing order, as well as the corresponding eigenfeatures $\{\phi_m\}_{m=1}^\infty$. Assume $\Zc$ is a $d$-dimensional hypercube with side length $\rho_{\Zc}$. For some $C_p, \alpha>0, p>1$, the kernel~$k$ is said to have a polynomial eigendecay, if for all $m\in\Nn$, $\sigma_m\le C_p  m^{-p}\rho_{\Zc}^{\alpha}$. In addition, for some $\eta\ge0$, $m^{-p\eta}\phi_m(z)$ is uniformly bounded over all $m$ and $z$. We use the notation $\pt=p(1-2\eta)$.
\end{definition}

The polynomial eigendecay profile encompasses a large class of common kernels, e.g., the Mat{\'e}rn family of kernels.
For a Mat{\'e}rn kernel with smoothness parameter $\nu$, $p=\frac{2\nu+d}{d}$ and $\alpha=2\nu$~\citep[e.g., see,][]{janz2020bandit}. Another example is the NT kernel~\citep{arora2019exact}. It has been shown that the RKHS of the NT kernel, when the activations are $s-1$ times differentiable, is equivalent to the RKHS of a Mat{\'e}rn kernel with smoothness $\nu=s-\frac{1}{2}$~\citep{vakili2021uniform}. For instance, the RKHS of an NT kernel with ReLU activations is equivalent to the RKHS of a Mat{\'e}rn kernel with $\nu=\frac{1}{2}$ (also known as the Laplace kernel). In this case, $p=1+\frac{1}{d}$ and $\alpha=1$. The hypercube domain assumption is a technical formality that can be relaxed to other regular compact subsets of $\Rr^d$. The uniform boundedness of $m^{-p\eta}\phi_m(z)$ for some $\eta>0$, also holds for a broad class of kernels, including the Mat{\'e}rn family, as discussed in~\citep{yang2020provably}. Several works including~\citep{vakili2021uniform, kassraie2022neural},  have employed an averaging technique over subsets of eigenfeatures, demonstrating that, for the bounds on information gain, the effective value of $\eta$ can be considered as $0$ in the case of Mat{\'e}rn and NT kernels.

\subsection{Confidence Intervals for State-Action Value Functions}

Confidence intervals are an important building block in the design and analysis of bandit and RL algorithms. For a fixed function $f$ in the RKHS of a known kernel, $1-\delta$ confidence intervals of the form $|f(z)-\mu^{t,f}(z)|\le \beta(\delta)b^t(z)$ are established in several works~\citep{srinivas2010,chowdhury2017kernelized,  abbasi2013online, vakili2021optimal} under various assumptions. In our setting of interest, however, these confidence intervals cannot be directly applied. This is due to the randomness of the target function itself. Specifically, in our case, the target function is $r_h+[P_hV_{h+1}^t]$, which is not a fixed function due to the temporal dependence within an episode. An argument based on the covering number of the state-action value function class was used in \cite{yang2020provably} to establish uniform confidence intervals over all $z\in\Zc$ and all $f$ in a specific function class.
In Theorem~\ref{thm:con_int}, we prove a different confidence interval that offers flexibility with respect to setting the parameters of the confidence interval.
Our approach leads to a more refined confidence interval, which, with a proper choice of parameters, contributes to the improved regret bound achieved by our policy.

We first give a formal definition of the two complexity terms: maximum information gain and the covering number of the state-action value function class, which appear in our confidence intervals. 

\begin{definition}[Maximum Information Gain]\label{def:mig}
In the kernel ridge regression setting described in Section~\ref{sec:KRR}, the following quantity is referred to as maximum information gain: 
$
    \Gamma_{k,\lambda}(t)=\max_{Z^t\subset \Zc}\frac{1}{2}\log\det(I+\frac{1}{\lambda^2}K_{Z^t}).
$
\end{definition}
Upper bounds on maximum information gain based on the spectrum of the kernel are established in~\cite{janz2020bandit, srinivas2010, vakili2021information}. Maximum information gain is closely related to the \emph{effective} dimension of the kernel. While the feature representation of common kernels is infinite dimensional, with a finite observation set, only a finite number of features have a significant impact on kernel ridge regression, that is referred to as the effective dimension. It has been shown that information gain and effective dimension are the same up to logarithmic factors~\citep{Calandriello2019Adaptive}. This observation offers an intuitive understanding of information gain.

\textbf{State-action value function class:} Let us use $\Qc_{k,h}(R,B)$ to denote the class of state-action value functions. In particular for a set of observations $Z$, let $b_h(z)$ be the uncertainty estimate obtained from kernel ridge regression as given in~\eqref{eq:KRR}. We define
\begin{equation}\label{eq:QUCB_class}
\Qc_{k,h}(R, B) = \big\{Q: Q(z)=\min\left\{Q_0(z)+\beta b_h(z),~H-h+1\right\},~ \|Q_0\|_{\Hc_k}\le R, \beta\le B, |Z|\le T \big\}.
\end{equation}

\begin{definition}[Covering Set and Number]\label{def:cov_num}
Consider a function class $\Fc$. For $\epsilon>0$, we define the minimum $\epsilon$-covering set $\Cc(\epsilon)$ as the smallest subset of $\Fc$ that covers it up to an $\epsilon$ error in $l_{\infty}$ norm. That is to say, for all $f\in\Fc$, there exists a $g\in\Cc(\epsilon)$, such that $\|f-g\|_{l_{\infty}}\le \epsilon$. We refer to the size of $\Cc(\epsilon)$ as the $\epsilon$-covering number.
\end{definition}

We use the notation $\Nc_{k,h}(\epsilon; R,B)$ to denote the $\epsilon$-covering number of $\Qc_{k,h}(R, B)$, that appears in the confidence interval. 




In Lemmas~\ref{lem:mig} and~\ref{lem:cov_num}, we establish bounds on $\Gamma_{k,\lambda}(t)$ and $\Nc_{k,h}(\epsilon; R,B)$, respectively.


\begin{lemma}[Maximum information gain]\label{lem:mig}
Consider a positive definite kernel $k:\Zc\times \Zc\rightarrow \Rr$, with polynomial eigendecay on a hypercube with side length $\rho_{\Zc}$. The maximum information gain given in Definition~\ref{def:mig} satisfies
\begin{equation}\nn
    \Gamma_{k,\lambda}(T) =\Oc\left(T^{\frac{1}{\pt}}(\log(T))^{1-\frac{1}{\pt}}\rho_{\Zc}^{\frac{\alpha}{\pt}}\right).
\end{equation}
\end{lemma}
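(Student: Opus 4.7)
The plan is to use the standard Mercer truncation technique for bounding information gain, but carefully tracking how the $\rho_{\Zc}^{\alpha}$ scaling of the eigenvalues propagates through each step. First, invoke the Mercer decomposition \eqref{eq:Mercer} to write $K_{Z^T} = \Phi \Sigma \Phi^{\top}$, where $\Phi \in \Rr^{T\times \infty}$ has entries $\Phi_{i,m}=\phi_m(z^i)$ and $\Sigma = \mathrm{diag}(\sigma_m)$. For a truncation index $M$ (to be optimized), split $\Sigma$ into the top block $\Sigma_{\le M}$ and the tail $\Sigma_{>M}$, and use a standard determinant-plus-trace inequality of the form
\begin{equation*}
\log\det\!\left(I + \tfrac{1}{\lambda^2}K_{Z^T}\right) \;\le\; \log\det\!\left(I + \tfrac{1}{\lambda^2}\Phi_{\le M}\Sigma_{\le M}\Phi_{\le M}^{\top}\right) + \tfrac{1}{\lambda^2}\,\tr\!\left(\Phi_{>M}\Sigma_{>M}\Phi_{>M}^{\top}\right).
\end{equation*}

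Next, bound each of the two terms using Definition~\ref{def:pol}. For the finite-dimensional determinant, apply AM--GM on its $M$ eigenvalues together with the bound $\sum_{m\le M}\sigma_m\phi_m^2(z) \le C\rho_{\Zc}^{\alpha}\sum_{m\ge 1} m^{-\pt} = O(\rho_{\Zc}^{\alpha})$ (using $\pt>1$), to obtain a bound of order $M\log\!\bigl(1 + T\rho_{\Zc}^{\alpha}/(M\lambda^2)\bigr)$. For the tail trace, use the pointwise bound $\sigma_m\phi_m^2(z) \le C\,m^{-\pt}\rho_{\Zc}^{\alpha}$ that follows from combining the eigenvalue decay $\sigma_m \le C_p m^{-p}\rho_{\Zc}^{\alpha}$ with the uniform boundedness of $m^{-p\eta}\phi_m$; summing gives
\begin{equation*}
\tr\!\left(\Phi_{>M}\Sigma_{>M}\Phi_{>M}^{\top}\right) \;=\; \sum_{i=1}^{T}\sum_{m>M}\sigma_m\phi_m^2(z^i) \;\le\; C\,T\rho_{\Zc}^{\alpha}\,M^{1-\pt}/(\pt-1).
\end{equation*}

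Finally, optimize $M$ to balance the two contributions. Setting $M\log T \asymp T\rho_{\Zc}^{\alpha} M^{1-\pt}$ gives the choice $M \asymp \bigl(T\rho_{\Zc}^{\alpha}/\log T\bigr)^{1/\pt}$. Plugging back in, both the $M\log(\cdot)$ term and the tail-trace term evaluate to $\Oc\bigl(T^{1/\pt}(\log T)^{1-1/\pt}\rho_{\Zc}^{\alpha/\pt}\bigr)$, which yields the claimed bound after dividing by $2$ and taking the maximum over $Z^T$.

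The main obstacle is a bookkeeping one: ensuring that the $\rho_{\Zc}^{\alpha}$ factor only appears \emph{inside} the polynomial $M^{1/\pt}$ after optimization rather than as a standalone multiplicative factor on $\Gamma_{k,\lambda}(T)$. This requires that the first-term bound $M\log(1 + T\rho_{\Zc}^{\alpha}/(M\lambda^2))$ puts $\rho_{\Zc}^{\alpha}$ only inside the logarithm (so its contribution becomes subleading) while the second-term bound keeps $\rho_{\Zc}^{\alpha}$ as a polynomial factor that is then absorbed into the optimal choice of $M$; the key technical point enabling this is the \emph{same} scaling $\rho_{\Zc}^{\alpha}$ appearing in the per-point trace contribution $\sum_m \sigma_m\phi_m^2(z)$ as in the individual $\sigma_m$, so that powers of $\rho_{\Zc}^{\alpha}$ track cleanly through the optimization.
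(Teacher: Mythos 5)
Your proposal is correct and follows essentially the same route as the paper's proof: a Mercer truncation at level $M$ (the paper's $D$), a determinant bound of order $M\log(1+T/(\lambda^2 M))$ for the head via AM--GM, a trace bound of order $T\rho_{\Zc}^{\alpha}M^{1-\pt}/(\pt-1)$ for the tail using the combined decay of $\sigma_m\phi_m^2$, and the same optimized choice $M\asymp (T\rho_{\Zc}^{\alpha}/\log T)^{1/\pt}$. The only cosmetic differences are that the paper uses the exact log-det factorization followed by $\log(1+x)\le x$ where you use the determinant-plus-trace inequality, and that the paper bounds the head term via $k^D\le k\le 1$ rather than via $\rho_{\Zc}^{\alpha}$ inside the logarithm; neither affects the result.
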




\begin{lemma}[Covering Number of $\Qc_{k,h}(R, B) $]\label{lem:cov_num}
Recall the class of state-action value functions $\Qc_{k,h}(R, B) $, where 
$k:\Zc\times \Zc\rightarrow \Rr$ satisfies the polynomial eigendecay on a hypercube with side length $\rho_{\Zc}$. We have
\begin{equation}\nn
    \log \Nc_{k,h}(\epsilon; R, B) =\Oc\left( \pa{\frac{R^2\rho^{\alpha}_{\Zc}}{\epsilon^2}}^{\frac{1}{\pt-1}}\pa{1+\log\pa{\frac{R}{\epsilon}}} + \pa{\frac{B^2\rho^{\alpha}_{\Zc}}{\epsilon^2}}^{\frac{2}{\pt-1}}\pa{1+\log\pa{\frac{B}{\epsilon}}}\right).
\end{equation}
\end{lemma}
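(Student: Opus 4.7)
The plan is to separately cover the three sources of variability in $\Qc_{k,h}(R,B)$: the mean function $Q_0\in\Bc_{k,R}$, the scalar $\beta\in[0,B]$, and the exploration bonus $b_h$ indexed by an observation set $Z$ of size at most $T$. Because $x\mapsto\min\{x,H-h+1\}$ is $1$-Lipschitz, it suffices to cover the unclipped sum $Q_0+\beta b_h$ in $l_\infty$. The triangle inequality then reduces the task to producing $\epsilon/2$-covers of $Q_0$ and of $\beta b_h$ separately and combining them via Minkowski sums. The two subproblems produce the two summands appearing in the statement.

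For $Q_0$, I would use the Mercer expansion $Q_0=\sum_m w_m\sqrt{\sigma_m}\phi_m$ with $\sum_m w_m^2\le R^2$ and truncate to the first $M_1$ terms. By Cauchy--Schwarz the pointwise truncation residual is at most $R(\sum_{m>M_1}\sigma_m\phi_m(z)^2)^{1/2}$, and the polynomial eigendecay together with the uniform boundedness of $m^{-p\eta}\phi_m$ gives $\sigma_m\phi_m(z)^2=O(\rho_\Zc^\alpha m^{-\pt})$, so the tail is $O(R\sqrt{\rho_\Zc^\alpha}\,M_1^{-(\pt-1)/2})$. Hence $M_1=O((R^2\rho_\Zc^\alpha/\epsilon^2)^{1/(\pt-1)})$ makes this tail at most $\epsilon/4$. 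The truncated weights lie in the Euclidean ball of radius $R$ in $\Rr^{M_1}$, and since $\|\phi_{M_1}(z)\|\le\sqrt{k(z,z)}\le 1$, the map from weights to functions is $1$-Lipschitz, so a volumetric $\epsilon/4$-net of cardinality $(12R/\epsilon)^{M_1}$ produces an $l_\infty$-cover of the truncated functions. Taking logs yields the first summand.

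For $\beta b_h$, I would use the identity $b_h^2(z)=\lambda^2\phi(z)^\top(\Phi_Z^\top\Phi_Z+\lambda^2 I)^{-1}\phi(z)$ to write $\beta^2 b_h^2(z)=\phi(z)^\top A\phi(z)$ for a PSD operator $A$ with $\|A\|_{\mathrm{op}}\le\beta^2$. Truncating $\phi$ to its first $M_2$ coordinates reduces $A$ to its leading $M_2\times M_2$ block $A_{M_2M_2}$, which I then place on a grid of PSD matrices of operator norm $\le\beta^2$. The Mercer truncation residual $|\phi^\top A\phi-\phi_{M_2}^\top A_{M_2M_2}\phi_{M_2}|$ is bounded by $O(\beta^2\sqrt{\rho_\Zc^\alpha}\,M_2^{-(\pt-1)/2})$ (using $\|\phi^c_{M_2}(z)\|^2=O(\rho_\Zc^\alpha M_2^{-(\pt-1)})$), while the matrix-quantization residual is controlled by the grid precision. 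Propagating $l_\infty$-control of $\phi^\top A\phi$ to $l_\infty$-control of $\beta b_h$ via $|\sqrt a-\sqrt b|\le\sqrt{|a-b|}$, and taking $M_2=O((B^2\rho_\Zc^\alpha/\epsilon^2)^{1/(\pt-1)})$ together with a grid of size $(O(B/\epsilon))^{M_2^2}$, delivers the second summand $O(M_2^2\log(B/\epsilon))$.

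The principal obstacle is the bonus-covering step. A crude propagation through $\sqrt{\cdot}$ combined with the naive bound $\|A\|_{\mathrm{op}}\le\beta^2$ inflates the truncation threshold beyond what is claimed; reaching $M_2=O((B^2\rho_\Zc^\alpha/\epsilon^2)^{1/(\pt-1)})$ requires exploiting the refined structure $A=\beta^2(I-\Phi_Z^\top(K_Z+\lambda^2 I)^{-1}\Phi_Z)$---in particular, that the correction to $\beta^2 I$ has rank at most $T$---and measuring the matrix discretization in the data-dependent seminorm $A\mapsto\sup_z|\phi_{M_2}(z)^\top A\phi_{M_2}(z)|$ rather than the naive operator norm. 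Once both covers are available, combining them via the Minkowski sum and the Lipschitz reduction from the first paragraph produces an $\epsilon$-cover of $\Qc_{k,h}(R,B)$ whose log-cardinality equals the sum of the two contributions, which is precisely the stated bound.
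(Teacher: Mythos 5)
Your route is essentially the paper's: the proof there likewise reduces $\Nc_{k,h}(\epsilon;R,B)$ to a product of covers for $Q_0\in\Bc_{k,R}$, for $\beta\in[0,B]$, and for the bonus class (covering $b$ at scale $\epsilon/(3B)$ rather than covering $\beta b$ jointly), proves the RKHS-ball bound by exactly your Mercer truncation plus a volumetric net on the leading coordinates, and handles the bonus via $|b-\bar b|\le\sqrt{|b^2-\bar b^2|}$ together with a grid over $D\times D$ quadratic forms of log-cardinality $O\bigl(D^2\log(1/\epsilon)\bigr)$, yielding the two summands in the stated order.

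The obstacle you flag in the bonus step is genuine, but the resolution is simpler than the low-rank structure and data-dependent seminorm you propose. Write $\psi(z)=\Sigma^{1/2}\phi(z)$ and $b(z)=\|A^{1/2}\psi(z)\|$ with $A=\lambda^2(\Psi_Z^{\top}\Psi_Z+\lambda^2 I)^{-1}$, where $\Psi_Z$ stacks the $\psi(z^{\tau})^{\top}$ as rows, so that $0\preceq A\preceq I$. Truncate at the level of $b$ rather than of $b^2$: with $\psi_{\le D}$ and $\psi_{>D}$ the projections of $\psi$ onto the first $D$ and the remaining coordinates, and $b_D(z):=\|A^{1/2}\psi_{\le D}(z)\|$, the triangle inequality gives $|b(z)-b_D(z)|\le\|A^{1/2}\psi_{>D}(z)\|\le\|\psi_{>D}(z)\|=O\bigl(\rho_{\Zc}^{\alpha/2}D^{-(\pt-1)/2}\bigr)$. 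The cross terms that inflate your estimate never appear, and $D=O\bigl((B^2\rho_{\Zc}^{\alpha}/\epsilon^2)^{1/(\pt-1)}\bigr)$ suffices exactly as claimed; the square-root propagation is then needed only for discretizing the finite-dimensional block $A_{DD}$, where it is harmless. (The paper reaches the same place by representing $b^2$ as a nonnegative combination $\sum_m\gamma_m\sigma_m\phi_m^2$ in the eigenbasis, so that the tail of $b^2$ is $\|\psi_{>D}\|^2$ with no cross terms, before invoking the $D^2$-parameter grid of Yang et al.; the triangle-inequality version above is the way to make that step airtight when the posterior covariance is not diagonal in the eigenbasis.)
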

Our bound on maximum information gain is stronger than the ones presented in~\cite{yang2020provably, janz2020bandit, srinivas2010} and is similar to the one given in~\cite{vakili2021information}, in terms of dependency on $T$. Our bound on function class covering number is similar to the one given in~\cite{yang2020provably}, in terms of dependency on $T$. Both Lemmas~\ref{lem:mig} and~\ref{lem:cov_num} given in this work are, however, novel in terms of dependency on the domain size $\rho_{\Zc}$, and are required for the analysis of our domain partitioning algorithm. 

We next present the confidence interval. Proofs are given in the appendix.

\begin{theorem}[Confidence Interval]\label{thm:con_int}
Let $\Qhat_h^t$ and $b_h^t$ denote the kernel ridge predictor and uncertainty estimate of $r_h+[P_hV_{h+1}^{t}]$, using $t$ observations $\{V^t_{h+1}(s^\tau_{h+1})\}_{\tau=1}^t$ at $Z_h^t=\{z_h^{\tau}\}_{\tau=1}^t\subset\Zc$, where $s_{h+1}^\tau$ is the next state drawn from $P_h(\cdot|z_h^\tau)$. Let $R_T=H+1+\frac{H}{2\lambda}\sqrt{2(\Gamma_{k,\lambda}(T)+1+\log(\frac{2}{\delta}))}$. For $\epsilon, \delta\in(0,1)$, with probability, at least $1-\delta$, we have, $\forall z\in\Zc, h\in[H]$ and $t\in [T]$,
\begin{eqnarray}\nn
|r_h(z) + [P_h V_{h+1}^{t}](z)-\Qhat_h^t(z)| \le \beta_h^t(\delta, \epsilon) b_h^t(z)+\epsilon,
\end{eqnarray}
where $\beta_h^t(\delta, \epsilon)$ is set to any value satisfying

\begin{eqnarray}
    \beta_h^t(\delta, \epsilon) \ge H+1+ \frac{H}{\sqrt{2}}\sqrt{\Gamma_{k,\lambda}(t)+\log \Nc_{k,h}(\epsilon; R_T, \beta_h^t(\delta,\epsilon))+1+\log\pa{\frac{2TH}{\delta}}}+\frac{3\sqrt{t}\epsilon}{\lambda}.
\end{eqnarray}
 
\end{theorem}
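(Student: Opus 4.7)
The plan is to decouple the random target $r_h + [P_h V_{h+1}^t]$ from the data used to build $\hat{Q}_h^t$ by covering the state-action value function class $\mathcal{Q}_{k,h+1}(R_T,\beta_h^t(\delta,\epsilon))$ with a sup-norm $\epsilon$-net $\mathcal{C}(\epsilon)$ (Definition~\ref{def:cov_num}), apply a fixed-target RKHS concentration inequality to each member of the net, and union-bound. This mirrors the strategy of \cite{yang2020provably}, but carries $\beta_h^t(\delta,\epsilon)$ through the argument as a free parameter rather than fixing it up front; the extra flexibility is what ultimately admits a tighter width.

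For a fixed $V \in \mathcal{C}(\epsilon)$, Lemma~\ref{lem:bel_RKHS} gives $r_h+[P_h V]\in\mathcal{B}_{k,H+1}$, and the observations $Y^\tau = r_h(z_h^\tau)+V(s_{h+1}^\tau)$ form a sequence with conditional mean $r_h(z_h^\tau)+[P_h V](z_h^\tau)$ and martingale-difference noise bounded in absolute value by $H$. A standard self-normalized RKHS tail bound (expressed through $\Gamma_{k,\lambda}(t)$) yields, with probability at least $1-\delta'$ simultaneously over $z\in\mathcal{Z}$ and $t\in[T]$,
\begin{equation*}
|r_h(z)+[P_h V](z)-\hat{\mu}^{t,V}(z)|\le \pa{H+1+\tfrac{H}{\sqrt{2}}\sqrt{\Gamma_{k,\lambda}(t)+\log(1/\delta')}}\,b_h^t(z).
\end{equation*}
Choosing $\delta' = \delta/(2TH\,\mathcal{N}_{k,h+1}(\epsilon;R_T,\beta_h^t(\delta,\epsilon)))$ and union-bounding over $\mathcal{C}(\epsilon)$, $t\in[T]$, and $h\in[H]$ delivers the stated $\beta$-width uniformly for every $V$ in the net.

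To transfer the bound to the actual $V_{h+1}^t$, pick $\tilde V\in\mathcal{C}(\epsilon)$ with $\|V_{h+1}^t-\tilde V\|_\infty\le\epsilon$ and decompose
\begin{align*}
|r_h(z)+[P_h V_{h+1}^t](z)-\hat{Q}_h^t(z)|
&\le |r_h(z)+[P_h\tilde V](z)-\hat{\mu}^{t,\tilde V}(z)|\\
&\quad+|[P_h(V_{h+1}^t-\tilde V)](z)|+|\hat{\mu}^{t,\tilde V}(z)-\hat{Q}_h^t(z)|.
\end{align*}
The first summand is at most $\beta_h^t(\delta,\epsilon)\,b_h^t(z)$ from the covering step; the second is at most $\epsilon$ because $[P_h\cdot]$ is a contraction in $\|\cdot\|_\infty$; the third, the difference of two kernel ridge predictors whose observation vectors differ entry-wise by at most $\epsilon$, is at most $\sqrt{t}\epsilon/\lambda$ via Cauchy--Schwarz combined with $k_{Z_h^t(z)}(z)^\top(K_{Z_h^t(z)}+\lambda^2 I)^{-1}k_{Z_h^t(z)}(z)\le k(z,z)\le 1$ and $\|(K_{Z_h^t(z)}+\lambda^2 I)^{-1/2}\|_{\mathrm{op}}\le 1/\lambda$. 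The last two contributions account for the $3\sqrt{t}\epsilon/\lambda$ slack in the defining inequality for $\beta_h^t(\delta,\epsilon)$ (with a small portion absorbed into the additive $+\epsilon$).

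The main obstacle is verifying that $V_{h+1}^t$ genuinely lies in $\mathcal{Q}_{k,h+1}(R_T,\beta_h^t(\delta,\epsilon))$, so that the invoked covering number is meaningful. This demands an a priori bound on the RKHS norm of the predictor $\hat{Q}_{h+1}^{t+1}$, which I would obtain from the closed-form identity $\|\hat{Q}_{h+1}^{t+1}\|_{\mathcal{H}_k}^2\le Y^\top(K+\lambda^2 I)^{-1}Y$ combined with an information-gain-based self-normalized bound on the aggregated noise of magnitude at most $H$; this is precisely the calculation that produces $R_T = H+1+\tfrac{H}{2\lambda}\sqrt{2(\Gamma_{k,\lambda}(T)+1+\log(2/\delta))}$. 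A secondary subtlety is that $\beta_h^t(\delta,\epsilon)$ appears on both sides of the defining inequality through $\log \mathcal{N}_{k,h+1}(\epsilon;R_T,\beta_h^t(\delta,\epsilon))$; Lemma~\ref{lem:cov_num} shows this dependence is only logarithmic in $\beta$, so an elementary fixed-point check guarantees a feasible $\beta_h^t(\delta,\epsilon)$ of the intended order $H\sqrt{\Gamma_{k,\lambda}(t)+\log \mathcal{N}_{k,h+1}(\epsilon;R_T,\beta_h^t(\delta,\epsilon))+\log(TH/\delta)}$.
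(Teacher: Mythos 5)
Your overall architecture is the same as the paper's: cover the value-function class with a sup-norm $\epsilon$-net, apply a fixed-target self-normalized RKHS concentration bound (the paper cites \cite{chowdhury2017kernelized}, Theorem~2) to each net element with noise that is $H/2$-sub-Gaussian because it is bounded in $[0,H]$, union-bound over the net and over $h,t$, transfer to the true $V_{h+1}^t$ via the three-term decomposition with the $l_\infty$-contraction of $[P_h\cdot]$, justify $R_T$ by bounding the RKHS norm of the kernel ridge predictor (the paper's Lemma~\ref{lemma:mu_norm}), and close with a fixed-point argument for $\beta_h^t(\delta,\epsilon)$. All of that matches.

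There is one concrete gap in the transfer step. You bound the difference of the two predictors by $\sqrt{t}\epsilon/\lambda$ using $\left\|(K_{Z_h^t}+\lambda^2 I)^{-1/2}\right\|_{\mathrm{op}}\le 1/\lambda$ and $k^{\top}(K+\lambda^2I)^{-1}k\le k(z,z)\le 1$. That gives an \emph{additive} constant not multiplied by $b_h^t(z)$, and it cannot be ``accounted for by the $3\sqrt{t}\epsilon/\lambda$ slack in $\beta_h^t(\delta,\epsilon)$'': that slack sits inside $\beta_h^t(\delta,\epsilon)$ and is therefore multiplied by $b_h^t(z)$, which can be arbitrarily small. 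Nor can it be absorbed into the additive $+\epsilon$, since $\sqrt{t}\epsilon/\lambda\gg\epsilon$ for $t>\lambda^2$. (This matters downstream: with the regret-proof choice $\epsilon_h^t\propto 1/\sqrt{N}$, an additive $\sqrt{t}\epsilon/\lambda$ term is $\Theta(1)$ per step and would make Term~$(i)$ linear in $T$.) The missing ingredient is the sharper inequality $\|(K_{Z^t}+\lambda^2 I)^{-1}k_{Z^t}(z)\|_2\le b^t(z)/\lambda$ (Proposition~1 of \cite{vakili2021optimal}, which the paper invokes), which turns your Cauchy--Schwarz step into $|\mu^{t,f}(z)-\mu^{t,g}(z)|\le \sqrt{t}\epsilon\, b^t(z)/\lambda$ and lets the term be folded into $\beta_h^t(\delta,\epsilon)$ as the theorem requires. (Incidentally, the paper gets $3\sqrt{t}\epsilon$ rather than $\sqrt{t}\epsilon$ because it perturbs the signal and the noise separately; your observation that the raw observation vectors differ entrywise by only $\epsilon$ is correct and slightly tighter.) With that single substitution your argument goes through and coincides with the paper's proof.
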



\subsection{Regret of $\pi$-KRVI}

A key step in the analysis of $\pi$-KRVI is to apply the confidence interval in Theorem~\ref{thm:con_int} to a subdomain $\Zc'\in\Sc_{h}^t$. 
By design of the splitting rule, we can prove that the maximum information gain corresponding to $\Zc'$ satisfies $\Gamma_{k,\lambda}(N^T_{h}(\Zc'))=\Oc(\log(T))$. In addition, we choose $\epsilon=\frac{H\sqrt{\log(\frac{TH}{\delta})}}{\sqrt{N^t_{h}(\Zc')}}$, when applying the confidence interval at step $h$ of episode $t$ on this subdomain. That ensures $\log \Nc_{k,h}(\epsilon; R_{N_h^T(\Zc')},\beta_h^t(\delta,\epsilon)) =\Oc(\log(T))$.
From these, and by applying a probability union bound over all subdomains $\Zc'$ created in $\pi$-KRVI, we can deduce that the choice of $\beta_T(\delta)=\Theta(H\sqrt{\log(\frac{TH}{\delta})})$ with a sufficiently large constant, satisfies the requirements for confidence interval widths based on Theorem~\ref{thm:con_int}. The details are provided in the proof of Theorem~\ref{the:main} in Appendix~\ref{appx:reg}. Then, using standard tools from the analysis of optimistic LSVI algorithms, we arrive at the following regret bound.

\begin{theorem}[Regret of $\pi$-KRVI]\label{the:main}
Consider the $\pi$-KRVI policy described in Section~\ref{sec:KRVI}, with $\beta_T(\delta)=\Theta(H\sqrt{\log(\frac{TH}{\delta})})$ with a sufficiently large constant implied in the $\Theta$ notation. 
Under Assumption~\ref{ass:RKHS_norm}, for kernels given in Definition~\ref{def:pol}, with probability at least $1-\delta$, the regret of $\pi$-KRVI satisfies
\begin{equation}
    \Rc(T) = \Oc\left(H^2T^{\frac{d+\alpha\mathbin{/}2}{d+\alpha}}\log(T)\sqrt{\log\pa{\frac{H}{\delta}} } \right).
\end{equation}
\end{theorem}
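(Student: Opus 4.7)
The plan is to assemble three ingredients: a uniform confidence bound valid simultaneously on every subdomain that $\pi$-KRVI may create, an optimism-based regret decomposition in the style of standard LSVI analyses, and a bound on the total sum of exploration bonuses that exploits the domain partitioning.

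The first step is to verify that the choice $\beta_T(\delta) = \Theta(H\sqrt{\log(TH/\delta)})$ satisfies the implicit inequality defining $\beta_h^t(\delta,\epsilon)$ in Theorem~\ref{thm:con_int} simultaneously on every cover element. The splitting rule enforces $N_h^t(\Zc') \le \rho_{\Zc'}^{-\alpha}$ on each $\Zc' \in \Sc_h^t$, so $\rho_{\Zc'}^{\alpha} N_h^t(\Zc') \le 1$. Applying Lemma~\ref{lem:mig} to $k$ restricted to $\Zc'$ yields $\Gamma_{k,\lambda}(N_h^t(\Zc')) = \Oc(\log T)$, which in turn makes the RKHS-norm bound on the local regression target $R_{N_h^t(\Zc')} = \Oc(H\sqrt{\log T})$. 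Choosing $\epsilon = H\sqrt{\log(TH/\delta)/N_h^t(\Zc')}$ in Lemma~\ref{lem:cov_num} gives $\log \Nc_{k,h}(\epsilon; R_{N_h^t(\Zc')}, \beta_h^t) = \Oc(\log T)$. Substituting these into the recursive condition shows $\beta_h^t(\delta,\epsilon) = \Oc(H\sqrt{\log(TH/\delta)})$ is consistent. A union bound over all polynomially-many possible subdomains and over $(t,h)$ produces the desired uniform confidence event of probability at least $1-\delta$.

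On this event, optimism $Q_h^t \ge Q_h^\star$ follows by backward induction on $h$: the inductive hypothesis $V_{h+1}^t \ge V_{h+1}^\star$, Lemma~\ref{lem:bel_RKHS}, the confidence interval, and the truncation at $H-h+1$ together imply $Q_h^t \ge Q_h^\star$. Hence $V_1^\star(s_1^t) - V_1^{\pi^t}(s_1^t) \le V_1^t(s_1^t) - V_1^{\pi^t}(s_1^t)$, and unrolling along the trajectory of $\pi^t$ in the standard LSVI manner reduces the regret to
\begin{equation*}
\Rc(T) \le 2\,\beta_T(\delta) \sum_{t=1}^T \sum_{h=1}^H b_h^t(z_h^t) + \Oc\bigl(H\sqrt{T \log(1/\delta)}\bigr),
\end{equation*}
where the second term bounds the martingale induced by the random transitions via Azuma-Hoeffding.

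The main obstacle is bounding $\sum_{t,h} b_h^t(z_h^t)$ tightly enough to beat the naive $\Oc(\sqrt{T\,\Gamma_{k,\lambda}(T)})$ estimate. Here the partitioning pays off: within any fixed subdomain $\Zc' \in \Sc_h^T$, the standard elliptical-potential argument gives $\sum_{t:\, z_h^t \in \Zc'} (b_h^t(z_h^t))^2 = \Oc(\Gamma_{k,\lambda}(N_h^T(\Zc'))) = \Oc(\log T)$, hence $\sum_t (b_h^t(z_h^t))^2 = \Oc(|\Sc_h^T| \log T)$. The key geometric step is bounding $|\Sc_h^T|$: viewing the partition as a dyadic tree whose leaf at depth $\ell$ has side $2^{-\ell}$ and holds at most $2^{\ell \alpha}$ observations, the volume constraint $\sum_\ell n_\ell 2^{-\ell d} = 1$ together with $\sum_\ell n_\ell 2^{\ell \alpha} \ge T$ maximises $|\Sc_h^T| = \sum_\ell n_\ell$ at a uniform scale $2^{-\ell} \asymp T^{-1/(d+\alpha)}$, yielding $|\Sc_h^T| = \Oc(T^{d/(d+\alpha)})$, as in the analysis of \cite{janz2020bandit}. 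Cauchy-Schwarz then gives
\begin{equation*}
\sum_{t=1}^T b_h^t(z_h^t) \le \sqrt{T \cdot \Oc(|\Sc_h^T| \log T)} = \Oc\Bigl(T^{(d+\alpha\mathbin{/}2)/(d+\alpha)} \sqrt{\log T}\,\Bigr),
\end{equation*}
and summing over $h \in [H]$ and multiplying by $\beta_T(\delta)$ produces the claimed bound $\Rc(T) = \Oc(H^2 T^{(d+\alpha\mathbin{/}2)/(d+\alpha)} \log(T) \sqrt{\log(H/\delta)})$, with the Azuma-Hoeffding term absorbed as lower order.
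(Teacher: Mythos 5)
Your proposal follows essentially the same route as the paper: the same verification that $\beta_T(\delta)=\Theta(H\sqrt{\log(TH/\delta)})$ satisfies Theorem~\ref{thm:con_int} on each cover element (via $\Gamma_{k,\lambda}(N)=\Oc(\log T)$ from the splitting rule and the choice of $\epsilon_h^t$ in Lemma~\ref{lem:cov_num}), the same standard optimistic-LSVI decomposition with an Azuma--Hoeffding martingale term, the same bound $|\Uc_h^T|=\Oc(T^{d/(d+\alpha)})$ on the number of cover elements (Lemma~\ref{lem:UT}), and the same elliptical-potential-plus-Cauchy--Schwarz step for $\sum_{t,h}b_h^t(z_h^t)$. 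The one slip is that after setting $\epsilon_h^t=H\sqrt{\log(TH/\delta)/N_h^t(\Zc')}$ you drop the additive $+\epsilon_h^t$ from the confidence interval when writing the regret decomposition; the sum $\sum_{t=1}^T\sum_{h=1}^H\epsilon_h^t$ is of the same order $H^2T^{(d+\alpha/2)/(d+\alpha)}$ as the bonus sum (not lower order, since $\sum_t(\epsilon_h^t)^2=\Oc(H^2T^{d/(d+\alpha)}\log(TH/\delta)\log T)$ by the same per-cover-element counting), so it must be carried through and bounded by the identical Cauchy--Schwarz argument, exactly as the paper does.
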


The regret bound of $\pi$-KRVI presented in Theorem~\ref{the:main} is sublinear in $T$ when $\alpha>0$, in contrast to the state of the art regret bound in~\cite{yang2020provably}. The $\Oc$ notation used in the expression above hides constants that depend on $p, \alpha$ and $d$. See Appendix~\ref{appx:reg} for more details. 
When specialized to the  Mat{\'e}rn family of kernels, replacing $p=\frac{2\nu+d}{d}$ and $\alpha=2\nu$, the regret bound becomes

\begin{equation}
    \Rc(T) = \Oc\left(
  H^2T^{\frac{\nu+d}{2\nu+d}}
  \log(T)\sqrt{\log\pa{\frac{H}{\delta}}}
    \right).
\end{equation}

In terms of $T$ scaling, this matches the lower bound for the special case of kernelized bandits~\citep{scarlett2017lower}, up to a $\log(T)$ factor.

\section{Conclusion}

The analysis of RL algorithms has predominantly focused on simple settings such as tabular or linear MDPs. Several recent studies have considered more general models, including representing the state-action value functions using RKHSs. Notably, the work in~\cite{yang2020provably} derives regret bounds for an optimistic LSVI policy. However, the regret bounds in~\cite{yang2020provably} are sublinear only when the eigenvalues of the kernel decay rapidly. In this work, we leveraged a domain partitioning technique, a uniform confidence interval for state-action value functions, and bounds on complexity terms based on the domain size to propose $\pi$-KRVI, which attains a sublinear regret bound for a general class of kernels. Moreover, our regret bounds match the lower bound derived for Mat{\'e}rn kernels in the special case of kernelized bandits, up to logarithmic factors. 
It remains an open problem whether the suboptimal regret bounds in the case of standard optimistic LSVI policies~\citep[such as KOVI,][]{yang2020provably} represent a fundamental shortcoming or an artifact of the proof.

\section*{Funding Disclosure}

This work was funded by MediaTek Research.

\newpage

\bibliography{references.bib}
\bibliographystyle{apalike}

\appendix

\newpage



\section{Mercer Theorem and the RKHSs}
\label{appx:rkhs}
Mercer theorem \citep{Mercer1909} provides a representation of the kernel in terms of an infinite dimensional feature map~\citep[e.g., see,][Theorem~$4.49$]{Christmann2008}. Let $\mathcal{Z}$ be a compact metric space and $\mu$ be a finite Borel measure on $\mathcal{Z}$ (we consider Lebesgue measure in a Euclidean space). Let $L^2_\mu(\mathcal{Z})$ be the set of square-integrable functions on $\mathcal{Z}$ with respect to $\mu$. We further say a kernel is square-integrable if
\begin{equation*}
\int_{\mathcal{Z}} \int_{\mathcal{Z}} k^2(z, z') \,d \mu(z) d \mu(z')<\infty.
\end{equation*}

\begin{theorem}
(Mercer Theorem) Let $\mathcal{Z}$ be a compact metric space and $\mu$ be a finite Borel measure on~$\mathcal{Z}$. Let $k$ be a continuous and square-integrable kernel, inducing an integral operator $T_k:L^2_\mu(\mathcal{Z})\rightarrow L^2_\mu(\mathcal{Z})$ defined by
\begin{equation*}
\left(T_k f\right)(\cdot)=\int_{\mathcal{Z}} k(\cdot, z') f(z') \,d \mu(z')\,,
\end{equation*}
where $f\in L^2_\mu(\mathcal{Z})$. Then, there exists a sequence of eigenvalue-eigenfeature pairs $\left\{(\sigma_m, \phi_m)\right\}_{m=1}^{\infty}$ such that $\sigma_m >0$, and $T_k \phi_m=\sigma_m \phi_m$, for $m \geq 1$. Moreover, the kernel function can be represented as
\begin{equation*}
k\left(z, z^{\prime}\right)=\sum_{m=1}^{\infty} \sigma_m \phi_m(z) \phi_m\left(z^{\prime}\right),
\end{equation*}
where the convergence of the series holds uniformly on $\mathcal{Z} \times \mathcal{Z}$.
\end{theorem}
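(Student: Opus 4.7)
The plan is to proceed in four stages: establish structural properties of the integral operator $T_k$, invoke the spectral theorem for compact self-adjoint operators, obtain an $L^2$ expansion of $k$, and finally upgrade this expansion to uniform convergence on $\mathcal{Z}\times\mathcal{Z}$.

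First, I would verify that $T_k$ is a compact, self-adjoint, positive operator on $L^2_\mu(\mathcal{Z})$. Self-adjointness follows from the symmetry $k(z,z')=k(z',z)$ of the positive definite kernel, and positivity follows directly from positive definiteness. Compactness follows from square-integrability of $k$: the operator $T_k$ is Hilbert--Schmidt with Hilbert--Schmidt norm $\iint k^2\,d\mu\,d\mu$, and hence compact on the separable Hilbert space $L^2_\mu(\mathcal{Z})$. Second, I would apply the spectral theorem for compact self-adjoint operators to obtain a countable family of strictly positive eigenvalues $\{\sigma_m\}_{m=1}^\infty$, listed with multiplicity and tending to $0$, together with a corresponding orthonormal system of eigenfunctions $\{\phi_m\}_{m=1}^\infty$ spanning the closure of the range of $T_k$. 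A key regularity upgrade is to replace each $L^2$ eigenfunction by its continuous representative via the identity $\phi_m(z)=\sigma_m^{-1}(T_k\phi_m)(z)$, which is continuous in $z$ because $k$ is continuous and $\mathcal{Z}$ is compact.

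Third, the Hilbert--Schmidt expansion of the kernel in $L^2_{\mu\otimes\mu}(\mathcal{Z}\times\mathcal{Z})$ yields $k(z,z')=\sum_{m=1}^\infty \sigma_m\phi_m(z)\phi_m(z')$ in the $L^2$ sense, by applying the spectral expansion of $T_k$ to the integral kernel. Fourth and most delicately, I would upgrade convergence to uniform on $\mathcal{Z}\times\mathcal{Z}$. Set $k_n(z,z')=\sum_{m=1}^n \sigma_m\phi_m(z)\phi_m(z')$. The residual kernel $k-k_n$ is positive semidefinite, since it is the integral kernel of $T_k$ restricted to the orthogonal complement of $\mathrm{span}\{\phi_1,\dots,\phi_n\}$. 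On the diagonal, $r_n(z):=k(z,z)-k_n(z,z)=\sum_{m>n}\sigma_m\phi_m(z)^2$ is a monotonically decreasing sequence of continuous nonnegative functions on the compact space $\mathcal{Z}$, converging pointwise to $0$. By Dini's theorem, the convergence $r_n\to 0$ is uniform on $\mathcal{Z}$. Finally, the Cauchy--Schwarz-type inequality for positive semidefinite kernels gives $|k(z,z')-k_n(z,z')|^2\le r_n(z)\,r_n(z')$, which lifts uniform convergence from the diagonal to all of $\mathcal{Z}\times\mathcal{Z}$.

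The hard part will be the final transfer from $L^2$ convergence to uniform convergence. This relies on three nontrivial ingredients: (i) that the eigenfunctions $\phi_m$ corresponding to nonzero eigenvalues admit continuous representatives, obtained by a bootstrap argument through the integral operator; (ii) that the diagonal residual converges pointwise to $0$ everywhere, not merely almost everywhere, which requires care because $\phi_m$ are a priori defined only $\mu$-a.e.; and (iii) the application of Dini's theorem together with the positive-semidefinite Cauchy--Schwarz bound to upgrade diagonal control to product-space control. The other stages are essentially applications of standard spectral theory and are routine once the operator-theoretic setup is in place.
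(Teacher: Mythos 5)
The paper does not actually prove this statement: it is quoted in Appendix~A as a classical result, with a pointer to Mercer (1909) and to Christmann and Steinwart (2008), Theorem~4.49, so there is no in-paper argument to compare against. Your sketch is the standard textbook proof of Mercer's theorem --- Hilbert--Schmidt compactness, the spectral theorem for compact self-adjoint positive operators, continuous representatives of the eigenfunctions via $\phi_m=\sigma_m^{-1}T_k\phi_m$, positive semidefiniteness of the residual kernel, Dini's theorem on the diagonal, and the kernel Cauchy--Schwarz inequality to pass off the diagonal --- and the outline is correct.

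One step deserves more than the flag you give it. Your Dini argument writes $r_n(z)=k(z,z)-k_n(z,z)=\sum_{m>n}\sigma_m\phi_m(z)^2$ and asserts pointwise convergence $r_n(z)\to 0$ for \emph{every} $z$; this is precisely your item~(ii), and it is the crux rather than a side condition. The standard way to close it is to note that for fixed $z$ the series $\tilde k(z,\cdot):=\sum_m\sigma_m\phi_m(z)\phi_m(\cdot)$ converges uniformly in the second argument (Cauchy--Schwarz on the tail together with the uniform bound $\sum_{m\le n}\sigma_m\phi_m(z')^2\le k(z',z')\le \sup_{\mathcal Z} k$, which follows from positivity of the residual), hence defines a continuous function agreeing with $k(z,\cdot)$ in $L^2_\mu$; concluding that two continuous functions equal $\mu$-a.e.\ are equal everywhere --- in particular at $z'=z$ --- requires $\mathrm{supp}(\mu)=\mathcal{Z}$. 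That hypothesis is absent from the statement as quoted (it only asks for a finite Borel measure), so strictly speaking both your argument and the theorem deliver uniform convergence on $\mathrm{supp}(\mu)\times\mathrm{supp}(\mu)$; for the Lebesgue measure on a compact hypercube used elsewhere in the paper the distinction is vacuous, but it should be stated. Two trivial slips: the Hilbert--Schmidt norm is the square root of $\iint k^2\,d\mu\,d\mu$, and if $T_k$ has finite rank the sequence of strictly positive eigenvalues terminates, so the claim ``$\sigma_m>0$ for all $m\ge 1$'' implicitly assumes $T_k$ has infinite rank.
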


According to the Mercer representation theorem~\citep[e.g., see,][Theorem $4.51$]{Christmann2008}, the RKHS induced by~$k$ can consequently be represented in terms of $\{(\sigma_m,\phi_m)\}_{m=1}^\infty$.

\begin{theorem}(Mercer Representation Theorem) Let $\left\{\left(\sigma_m,\phi_m\right)\right\}_{i=1}^{\infty}$ be the Mercer eigenvalue eigenfeature pairs. Then, the RKHS of $k$ is given by
\begin{equation*}
\mathcal{H}_k=\left\{f(\cdot)=\sum_{m=1}^{\infty} w_m \sigma_m^{\frac{1}{2}} \phi_m(\cdot): w_m \in \mathbb{R},\|f\|_{\mathcal{H}_k}^2:=\sum_{m=1}^{\infty} w_m^2<\infty\right\}.
\end{equation*}
\end{theorem}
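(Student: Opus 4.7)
The plan is to construct the candidate space explicitly from the Mercer data, equip it with an $\ell^2$-type inner product, verify the reproducing property, and then invoke uniqueness of the RKHS associated with a positive definite kernel to conclude it coincides with $\mathcal{H}_k$. Define
\begin{equation*}
\widetilde{\mathcal{H}} := \Bigl\{ f(\cdot) = \sum_{m=1}^{\infty} w_m \sigma_m^{1/2} \phi_m(\cdot) : (w_m)_{m\ge 1} \in \ell^2(\mathbb{N}) \Bigr\},
\end{equation*}
equipped with $\langle f, g\rangle_{\widetilde{\mathcal{H}}} := \sum_m w_m(f)\, w_m(g)$, where $w_m(f)$ denotes the $m$-th coefficient of $f$ in the expansion above. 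First I would verify this is well-defined: the map $(w_m) \mapsto f$ from $\ell^2$ into the space of functions on $\mathcal{Z}$ is injective (because $\{\sigma_m^{1/2}\phi_m\}$ will be shown to be linearly independent via the eigen-equation $T_k \phi_m = \sigma_m \phi_m$ with distinct indices handled by Gram--Schmidt on multiplicity subspaces), and completeness of $\widetilde{\mathcal{H}}$ is inherited from completeness of $\ell^2$.

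Next I would check the key reproducing property. By Mercer's theorem (the preceding theorem in the appendix), the series $k(\cdot, z) = \sum_m \sigma_m \phi_m(z)\phi_m(\cdot) = \sum_m \bigl(\sigma_m^{1/2}\phi_m(z)\bigr)\sigma_m^{1/2}\phi_m(\cdot)$ converges uniformly on $\mathcal{Z}\times\mathcal{Z}$. Setting $w_m = \sigma_m^{1/2}\phi_m(z)$, the bound $\sum_m \sigma_m \phi_m(z)^2 = k(z,z) < \infty$ (which follows by setting $z'=z$ in Mercer's expansion together with uniform convergence) shows $(w_m)\in\ell^2$, hence $k(\cdot,z) \in \widetilde{\mathcal{H}}$ with $\|k(\cdot,z)\|_{\widetilde{\mathcal{H}}}^2 = k(z,z)$. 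Then for any $f = \sum_m w_m \sigma_m^{1/2}\phi_m \in \widetilde{\mathcal{H}}$,
\begin{equation*}
\bigl\langle f, k(\cdot,z)\bigr\rangle_{\widetilde{\mathcal{H}}} = \sum_{m=1}^{\infty} w_m \cdot \sigma_m^{1/2}\phi_m(z) = f(z),
\end{equation*}
which is the reproducing identity. Pointwise evaluation is thus continuous on $\widetilde{\mathcal{H}}$ by Cauchy--Schwarz, so $\widetilde{\mathcal{H}}$ is a Hilbert space of functions on $\mathcal{Z}$ admitting $k$ as reproducing kernel.

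Finally I would invoke the Moore--Aronszajn uniqueness theorem: to each positive definite kernel $k$ there corresponds a unique RKHS. Since $\mathcal{H}_k$ is by definition that unique RKHS and $\widetilde{\mathcal{H}}$ is also an RKHS with reproducing kernel $k$, we conclude $\widetilde{\mathcal{H}} = \mathcal{H}_k$ as Hilbert spaces, with $\|f\|_{\mathcal{H}_k}^2 = \sum_m w_m^2$ as claimed. The main obstacle I expect is the careful handling of convergence: verifying that the formal series $\sum_m w_m \sigma_m^{1/2}\phi_m$ genuinely defines a function on $\mathcal{Z}$ (pointwise and in $L^2_\mu$), and that the evaluation functional acts by the stated inner product. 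This requires combining $\ell^2$-Cauchy--Schwarz applied to $(w_m)$ and $(\sigma_m^{1/2}\phi_m(z))$ with the uniform convergence provided by Mercer's theorem, so that interchanging summation with evaluation is justified; minor measure-theoretic care with $\mu$-null sets is also needed if one works in $L^2_\mu$ equivalence classes before lifting to continuous representatives.
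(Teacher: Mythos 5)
The paper does not prove this statement; it invokes it as a known result, citing Theorem 4.51 of Christmann and Steinwart (2008). Your proposal is precisely the standard argument behind that cited theorem --- construct the $\ell^2$-parametrized candidate space, verify that $k(\cdot,z)=\sum_m \bigl(\sigma_m^{1/2}\phi_m(z)\bigr)\sigma_m^{1/2}\phi_m$ lies in it with $\sum_m \sigma_m\phi_m(z)^2=k(z,z)<\infty$, check the reproducing identity, and conclude by Moore--Aronszajn uniqueness --- and it is correct. The one point that genuinely needs care is the one you already flag: injectivity of $(w_m)\mapsto f$ rests on identifying a function that vanishes everywhere on $\mathcal{Z}$ with the zero element of $L^2_\mu$ and back, which uses the continuity of the $\phi_m$ (supplied by Mercer's theorem) together with $\mu$ having full support on $\mathcal{Z}$; with that hypothesis made explicit, the argument is complete.
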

Mercer representation theorem indicates that the scaled eigenfeatures $\{\sqrt{\sigma_m}\phi_m\}_{m=1}^\infty$ form an orthonormal basis for~$\Hc_k$.

\section{Proof of Theorem~\ref{thm:con_int} (Confidence Interval)}

Confidence bounds of the form given in Theorem~\ref{thm:con_int} have been established for a fixed function $f$ with bounded RKHS norm and sub-Gaussian observation noise in several works including~\cite{abbasi2013online, chowdhury2017kernelized, vakili2021optimal}. In the RL setting, however, we apply the confidence interval to $f=r_h+[P_h V^t_{h+1}]$. Although the RKHS norm of this target function is bounded by $H+1$, this is not a fixed function as it depends on~$V^t_{h+1}$. In addition the observation noise terms $V_{h+1}(s^t_{h+1}) - [P_h V^t_{h+1}](s_h^t,a_h^t)$ also depend on $V^t_{h+1}$.
To handle this setting, we prove a confidence interval that holds for all possible $V^t_{h+1}:\Sc\rightarrow [0,H]$. For this purpose, we use a probability union bound and a covering set argument over the function class of~$V^t_{h+1}$. 

We first recall the confidence interval for a fixed function and noise sequence given in~\cite[][Theorem~$2$]{chowdhury2017kernelized}. See also~\citep[][Corollary~$3.15$]{abbasi2013online}.

\begin{lemma}\label{lem:chow}
Let $\{z^{t}\in\Zc\}_{t=1}^T$ be a stochastic process predictable with respect to the
filtration $\{\Fc_t\}_{t=0}^T$. Let $\{\varepsilon^t\}_{t=1}^T$ be a real valued $\Fc_t$ measurable stochastic process with a $\sigma$ sub-Gaussian distribution conditioned on $\Fc_{t-1}$. Let $\mu^{t,f}$ and $b^t$ be the kernel ridge predictor and uncertainty estimate of $f$ using $t$ noisy observations of the form $\{f(z^\tau)+\varepsilon^\tau\}_{\tau=1}^t$. Assume $f\in\Bc_{k,R}$ .Then with probability at least $1-\delta$, for all $z\in\Zc$ and $t\ge 1$,
\begin{equation}
|f(z)-\mu^{t,f}(z)|\le\beta_1b^t(z),
\end{equation}
where $\beta_1 = R+\sigma\sqrt{2(\Gamma_{k,\lambda}(t)+1+\log(\frac{1}{\delta}))} $. 
\end{lemma}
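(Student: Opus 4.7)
The plan is to establish this standard kernel-ridge confidence interval by passing from the RKHS to the Mercer feature space, decomposing the prediction error into a regularization bias and a data-driven noise term, and controlling each via a Cauchy--Schwarz inequality in the Mahalanobis norm induced by $V_t=\Phi_t^\top\Phi_t+\lambda^2 I$. Stochasticity enters only through the noise term, which is handled by a self-normalized martingale concentration inequality applied uniformly in $t$.

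Concretely, the Mercer representation theorem lets me write $f\in\Bc_{k,R}$ as $f(z)=\langle\phi(z),\theta\rangle$ with $\phi(z)=(\sqrt{\sigma_m}\phi_m(z))_{m\ge 1}$ the canonical feature map and $\|\theta\|_2=\|f\|_{\Hc_k}\le R$. Assembling the features into the operator $\Phi_t$ with rows $\phi(z^\tau)$, the kernel duality $\Phi_t^\top(\Phi_t\Phi_t^\top+\lambda^2 I)^{-1}=(\Phi_t^\top\Phi_t+\lambda^2 I)^{-1}\Phi_t^\top$ rewrites the kernel-ridge predictor as $\mu^{t,f}(z)=\phi(z)^\top V_t^{-1}\Phi_t^\top Y_{Z^t}$. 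Substituting $Y_{Z^t}=\Phi_t\theta+\varepsilon_{Z^t}$ and using $V_t^{-1}\Phi_t^\top\Phi_t=I-\lambda^2 V_t^{-1}$ yields
\begin{equation*}
f(z)-\mu^{t,f}(z)=\lambda^2\,\phi(z)^\top V_t^{-1}\theta-\phi(z)^\top V_t^{-1}\Phi_t^\top\varepsilon_{Z^t}.
\end{equation*}
The same manipulation identifies $(b^t(z))^2=\lambda^2\phi(z)^\top V_t^{-1}\phi(z)$, so $\|\phi(z)\|_{V_t^{-1}}$ is proportional to $b^t(z)$.

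The bias term is handled by Cauchy--Schwarz in the $V_t^{-1}$-norm together with $\|\theta\|_{V_t^{-1}}\le R/\lambda$ (which follows from $V_t\succeq\lambda^2 I$), giving a contribution of order $R\,b^t(z)$. For the noise term, a second Cauchy--Schwarz reduces the task to bounding $\|\Phi_t^\top\varepsilon_{Z^t}\|_{V_t^{-1}}$. Predictability of $\{z^\tau\}$ with respect to $\{\Fc_{\tau-1}\}$ and conditional $\sigma$-sub-Gaussianity of $\varepsilon^\tau$ then let me invoke the self-normalized tail bound of Abbasi-Yadkori, P\'al and Szepesv\'ari --- in its RKHS form due to Chowdhury and Gopalan --- which yields, with probability at least $1-\delta$ uniformly in $t$,
\begin{equation*}
\|\Phi_t^\top\varepsilon_{Z^t}\|_{V_t^{-1}}^2\;\le\;2\sigma^2\log\!\bigl(\det(V_t)^{1/2}\det(\lambda^2 I)^{-1/2}/\delta\bigr).
\end{equation*}
Sylvester's determinant identity rewrites the log-determinant as $\tfrac12\log\det(I+\lambda^{-2}K_{Z^t})$, which is at most $\Gamma_{k,\lambda}(t)$ by Definition~\ref{def:mig}. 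Combining the bias and noise bounds delivers $|f(z)-\mu^{t,f}(z)|\le\beta_1 b^t(z)$; the additive $+1$ inside the square root comes from the standard tightening of the log-determinant constant, and any residual $\lambda$-dependent normalization is absorbed under the usual convention $\lambda\ge 1$ as in the cited references.

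The main technical obstacle is the self-normalized concentration step: a naive Chernoff--martingale argument does not close in the infinite-dimensional feature space because the relevant $\sigma$-algebras are not generated by finitely many coordinates. The standard fix --- which I would invoke as a black box from the two references above --- is the Laplace \emph{method of mixtures} in parameter space (place a Gaussian prior and integrate), after which Sylvester's identity reduces every infinite-dimensional log-determinant to the computable $\log\det(I+\lambda^{-2}K_{Z^t})$. Apart from this single ingredient, the proof is elementary linear algebra together with two applications of Cauchy--Schwarz, and it is indeed recycled verbatim in the excerpt as a recall of Chowdhury--Gopalan's Theorem~2.
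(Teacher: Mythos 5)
Your proposal is correct: the paper itself offers no proof of this lemma, only a citation to Theorem~2 of Chowdhury and Gopalan (2017) and Corollary~3.15 of Abbasi-Yadkori (2013), and your argument is precisely the standard feature-space proof underlying those results (bias/noise decomposition via $V_t^{-1}\Phi_t^\top\Phi_t = I-\lambda^2 V_t^{-1}$, two Cauchy--Schwarz steps in the $V_t^{-1}$-norm, and the self-normalized method-of-mixtures bound with Sylvester's identity reducing the log-determinant to $\frac12\log\det(I+\lambda^{-2}K_{Z^t})\le\Gamma_{k,\lambda}(t)$). Your identification of $(b^t(z))^2=\lambda^2\phi(z)^\top V_t^{-1}\phi(z)$ and the resulting bias bound $R\,b^t(z)$ are exact, and your remark that the absence of a $1/\lambda$ factor in front of the noise term rests on the $\lambda\ge 1$ normalization of the cited references correctly flags the only convention being inherited from them.
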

As discussed above, we cannot directly use this confidence interval on $r_h+[P_hV_{h+1}^t]$ in the RL setting. Instead, we need to prove a new confidence interval that holds true for all possible $V_{h+1}^t$. We thus define $\Vc$ to be the function class of $V_{h+1}^t$ as follows. 

\begin{eqnarray}
    \Vc_{k,h}(R,B) = \{V:V(s) = \max_{a\in \Ac} Q(s,a), ~~\text{for some}~~ Q\in\Qc_{k,h}(R,B)\}.
\end{eqnarray}

For simplicity of presentation, we specify the parameters $R$ and $B$ later. 

Let $\Cc^{\Vc}_{k,h}(\epsilon; R, B)$ be the smallest $\epsilon$-covering set of $\Vc_{k,h}(R,B)$ in terms of $l_{\infty}$ norm. That is to say for all $V\in \Vc_{k,h}(R,B)$, there exists some $\Vbar\in \Cc^{\Vc}_{k,h}(\epsilon; R, B)$ such that $\|V-\Vbar\|_{l_{\infty}}\le \epsilon$. Let $\Nc^{\Vc}_{k,h}(\epsilon; R,B)$ denote the $\epsilon$ covering number of $\Vc_{k,h}(R,B)$. By definition $\Nc^{\Vc}_{k,h}(\epsilon; R,B)= |\Cc^{\Vc}_{k,h}(\epsilon; R, B)|$.

We can create a confidence bound for all $\Vbar\in \Cc^{\Vc}_{k,h}(\epsilon; R, B)$, using Lemma~\ref{lem:chow} and a probability union bound over $\Cc^{\Vc}_{k,h}(\epsilon; R, B)$. Fix $h\in[H]$ and $t\in[T]$. 
Let us use the notation 
$\widehat{\overline{Q}}^t$ for the kernel ridge predictor with $\Vbar$. That is $\widehat{\overline{Q}}^t(z)=k^{\top}_{Z^t}(z)(K_{Z^t}+\lambda^2 I)^{-1} \overline{Y}$, where $\overline{Y}^{\top}=[\Vbar(s^{\tau}_{h+1})]_{\tau=1}^t$, and $s^\tau_{h+1}$ is the next state drawn randomly from probability distribution $P_h(\cdot|z^\tau_h)$. In addition, to simplify the notation, we use $g=r_h+[P_h\Vbar]$ and $\mu^{t,g} = \widehat{\overline{Q}}^t$. Also, let $b^t(z) =(k(z,z) - k^{\top}_{Z^t}(z)(K_{Z^t}+\lambda^2 I)^{-1} k_{Z^t}(z))^{\frac{1}{2}}$.
Then, we have, with probability at least $1-\delta$, for all $\Vbar\in \Cc^{\Vc}_{k,h}(\epsilon; R, B)$ and for all $z\in\Zc$, 
\begin{equation}\label{con_cov}
    |g(z)-\mu^{t,g}(z)|\le\beta_2b^t(z),
\end{equation}
where $\beta_2 = H+1+\frac{H}{\sqrt{2}}\sqrt{\Gamma_{k,\lambda}(t)+\log \Nc^{\Vc}_{k,h}(\epsilon; R, B)+1+\log(\frac{1}{\delta})}$. 

Confidence interval~\eqref{con_cov} is a direct application of Lemma~\ref{lem:chow} and using a probability union bound over all $\Vbar\in \Cc^{\Vc}_{k,h}(\epsilon; R, B)$. Note that, $\|r_h+P_h\Vbar\|_{\Hc_k}\le H+1$ (Lemma~\ref{lem:bel_RKHS}). In addition, $\overline{V}(s^{\tau}_{h+1})-[P_h\overline{V}](z_h^{\tau})\in[0,H]$ for all $h$ and $\tau$. A bounded random variable in $[0,H]$ is a $H/2$ sub-Gaussian random variable based on Hoeffding inequality~\citep{hoeffding1994probability}.

Now, we extend the uniform confidence interval over all $\Vbar\in \Cc^{\Vc}_{k,h}(\epsilon; R, B)$ to a uniform confidence interval over all $V\in \Vc_{k,h}( R, B)$. For some $V\in \Vc_{k,h}( R, B)$, define $f=r_h+[P_hV]$ and $\mu^{t,f}=\Qhat^t$, similar to $g$ and $\mu^{t,g}$. By definition of $\Cc^{\Vc}_{k,h}(\epsilon; R, B)$, there exists $\Vbar\in \Cc^{\Vc}_{k,h}(\epsilon; R, B)$, such that $\|V-\Vbar\|_{l_{\infty}}\le\epsilon$. 
Thus, for all $z\in\Zc$, 
\begin{eqnarray}
f(z)-g(z) = [PV](z) - [P\Vbar](z)\le \sup_{s\in\Sc}|V(s)-\Vbar(s)|\le \epsilon.
\end{eqnarray}

Therefore, with probability at least $1-\delta$, 
\begin{eqnarray}\nn
|f(z)-\mu^{t,f}(z)| &\le&
|f(z)-g(z)|
+
|g(z)-\mu^{t,g}(z)| +
|\mu^{t,g}(z)-\mu^{t,f}(z)|\\\label{eq:allf}
&\le& \beta_2b^t(z)+\epsilon + |\mu^{t,g}(z)-\mu^{t,f}(z)|.
\end{eqnarray}

Next, we prove that $|\mu^{t,f}(z)-\mu^{t,g}(z)|\le \frac{3\epsilon\sqrt{t}b^t(z)}{\lambda}$. 

Let us further simplify the notation by introducing $\alpha_t(z) = (K_{Z_t}+\lambda^2 I)^{-1}k_{Z_t}(z)$, $F_t^{\top}=[f(z_h^{\tau})]_{\tau=1}^t$, $E^{\top}_t=[\varepsilon^{\tau}=V(s^{\tau}_{h+1})-[P_hV](z_h^{\tau})]_{\tau=1}^{t}$, $G_t^{\top}=[g(z_h^{\tau})]_{\tau=1}^t$, $\Ebar^{\top}_t=[\varepsilonbar^{\tau}=\Vbar(s^{\tau}_{h+1})-[P_h\Vbar](z_h^{\tau})]_{\tau=1}^{t}$
so that $\mu^{t,f}(z)=\alpha^{\top}(z)(F_t+E_t)$ and $\mu^{t,g}(z)=\alpha^{\top}(z)(G_t+\Ebar_t)$.

As discussed earlier, the observation noise terms $\varepsilon^t$ also depend on $V$. We have, for all $t\ge 1$,
\begin{eqnarray}\nn
    |\varepsilon^t-\varepsilonbar^t| &=& \bigg|V(s^{\tau}_{h+1}) -\Vbar(s^{\tau}_{h+1}) -([P_hV](z^{\tau}_h) - [P_h\Vbar](z^{\tau}_h)\bigg|\\\nn
    &\le& 2\epsilon.
\end{eqnarray}

Using the difference between $f$ and $g$, as well as the difference between noise terms, we have
\begin{eqnarray}\nn
    |\mu^{t,f}(z) - \mu^{t,g}(z)| &=&|\alpha_t^{\top}(z)(F_t+E_t) - \alpha^{\top}(z)(G_t+\Ebar_t)|\\\nn
    &\le& \|\alpha_t(z)\|\|F_t-G_t+E_t-\Ebar_t\|\\\nn
    &\le& 3\epsilon\sqrt{t}\|\alpha_t(z)\|\\\nn
    &\le&\frac{3\epsilon\sqrt{t}b^t(z)}{\lambda},
\end{eqnarray}

where the last inequality follows from $\|\alpha_t(z)\|\le \frac{b^t(z)}{\lambda}$~\citep[e.g., see,][Proposition~$1$]{vakili2021optimal}.

The bound on $|\mu^{t,f}(z) - \mu^{t,g}(z)|$ combined with~\eqref{eq:allf} proves that for a fixed $t\in[T]$, fixed $h\in[H]$, for all $z\in\Zc$ and for all $V\in \Vc_{k,h}(R,B)$, 
\begin{eqnarray}\nn
|f(z)-\mu^{t,f}(z)|  \le \beta_3 b^t(z)+\epsilon,
\end{eqnarray}
where 
\begin{eqnarray}
    \beta_3 = H+1+ \frac{H}{\sqrt{2}}\sqrt{\Gamma_{k,\lambda}(t)+\log \Nc^{\Vc}_{k,h}(\epsilon;R, B)+1+\log(\frac{1}{\delta})}+\frac{3\sqrt{t}\epsilon}{\lambda}.
\end{eqnarray}

The confidence interval holds uniformly for all $h\in[H]$ and $t\in[T]$ using a probability union bound, when $\beta_3$ is replaced with the following

\begin{eqnarray}
    \beta_4 = H+1+ \frac{H}{\sqrt{2}}\sqrt{\Gamma_{k,\lambda}(t)+\log \Nc^{\Vc}_{k,h}(\epsilon;R, B)+1+\log(\frac{HT}{\delta})}+\frac{3\epsilon\sqrt{t}}{\lambda}.
\end{eqnarray}

To complete the proof, we bound $\Nc^{\Vc}_{k,h}(\epsilon;R, B)$ in terms of the specific parameters of the problem. Firstly, the $\epsilon$-covering number of $\Vc_{k,h}(R,B)$ is bounded by that of $\Qc_{k,h}(R,B)$~\citep[][proof of Lemma $D.1$]{yang2020provably}. Recall the definition of $\Qc_{k,h}(R,B)$ in~\eqref{eq:QUCB_class}. 
In Lemma~\ref{lemma:mu_norm}, we prove that, with probability at least $1-\delta$, $\|\Qhat_h^t\|_{\Hc_k}\le H+1+\frac{H}{2\lambda}\sqrt{2(\Gamma_{k,\lambda}(t)+1+\log(\frac{1}{\delta}))}$. Thus, the theorem follows with $\beta_{h}^t(\delta,\epsilon)$, where $\beta_{h}^t(\delta,\epsilon)$ is set to some value satisfying
\begin{eqnarray}
    \beta_{h}^t(\delta,\epsilon) \ge H+1+ \frac{H}{\sqrt{2}}\sqrt{\Gamma_{k,\lambda}(t)+\log \Nc_{k,h}(\epsilon;R_t, \beta_{h}^t(\delta,\epsilon))+1+\log(\frac{2HT}{\delta})}+\frac{3\epsilon\sqrt{t}}{\lambda},
\end{eqnarray}
with $R_t=H+1+\frac{H}{2\lambda}\sqrt{2(\Gamma_{k,\lambda}(t)+1+\log(\frac{2}{\delta}))}$.
That completes the proof of Theorem~\ref{thm:con_int}.

\begin{lemma}[Bound on RKHS Norm of the Kernel Ridge Predictor]\label{lemma:mu_norm}
Let $\mu^{t,f}$ be the kernel ridge predictor of $f$ from a set $\{Y(z^i)=f(z^i)+\varepsilon^i\}_{i=1}^t$ of $t$ noisy observations with $\sigma$-sub-Gaussian noise, as given in Equation~\eqref{eq:KRR}. We then have, for $\delta>0$, with probability at least $1-\delta$,
\begin{equation*}
    \|\mu^{t,f}\|_{\Hc_k}\le \|f\|_{\Hc_k} + \frac{\sigma}{\lambda} \sqrt{2(\Gamma_{k,\lambda}(t)+1+\log(\frac{1}{\delta}))}.
\end{equation*}

\end{lemma}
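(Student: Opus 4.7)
The plan is to decompose $\mu^{t,f}$ linearly into a signal and a noise part and control each separately. Writing $F_t = [f(z^1),\ldots,f(z^t)]^\top$ and $E_t = [\varepsilon^1,\ldots,\varepsilon^t]^\top$ so that $Y_{Z^t} = F_t + E_t$, linearity of the kernel ridge predictor in the observation vector gives $\mu^{t,f} = \mu_f + \mu_\varepsilon$, where
\begin{align*}
\mu_f(z) = k^\top_{Z^t}(z)(K_{Z^t}+\lambda^2 I)^{-1} F_t, \qquad
\mu_\varepsilon(z) = k^\top_{Z^t}(z)(K_{Z^t}+\lambda^2 I)^{-1} E_t.
\end{align*}
The triangle inequality $\|\mu^{t,f}\|_{\Hc_k}\le \|\mu_f\|_{\Hc_k}+\|\mu_\varepsilon\|_{\Hc_k}$ then reduces the task to bounding the two terms separately.

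For the signal term, I would introduce the sampling operator $S:\Hc_k\to\Rr^t$, $Sg = (g(z^1),\ldots,g(z^t))^\top$, whose adjoint is $S^*y = \sum_i y_i k(\cdot, z^i)$. Then $SS^* = K_{Z^t}$, $F_t = Sf$, and $\mu_f = S^*(SS^*+\lambda^2 I)^{-1}Sf$. Applying the push-through identity $S^*(SS^*+\lambda^2 I)^{-1} = (S^*S+\lambda^2 I)^{-1}S^*$ rewrites this as $\mu_f = (S^*S+\lambda^2 I)^{-1} S^*S f$. Since $(S^*S+\lambda^2 I)^{-1}S^*S$ is self-adjoint with spectrum contained in $[0,1]$, we obtain $\|\mu_f\|_{\Hc_k}\le \|f\|_{\Hc_k}$.

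For the noise term, the reproducing property yields
\begin{equation*}
\|\mu_\varepsilon\|_{\Hc_k}^2 = E_t^\top (K_{Z^t}+\lambda^2 I)^{-1} K_{Z^t} (K_{Z^t}+\lambda^2 I)^{-1} E_t.
\end{equation*}
All matrices in sight are polynomials in $K_{Z^t}$ and hence co-diagonalizable, so the elementary inequality $\mu/(\mu+\lambda^2)^2 \le \lambda^{-2}\mu/(\mu+\lambda^2)$ (valid for every eigenvalue $\mu \ge 0$ since $\lambda^2 \le \mu + \lambda^2$) gives
$(K_{Z^t}+\lambda^2 I)^{-1} K_{Z^t} (K_{Z^t}+\lambda^2 I)^{-1} \preceq \lambda^{-2} (K_{Z^t}+\lambda^2 I)^{-1} K_{Z^t}$, and therefore
\begin{equation*}
\|\mu_\varepsilon\|_{\Hc_k}^2 \;\le\; \frac{1}{\lambda^2}\, E_t^\top (K_{Z^t}+\lambda^2 I)^{-1} K_{Z^t}\, E_t.
\end{equation*}
The right-hand side is precisely the self-normalized sum controlled by the standard martingale concentration inequality for RKHSs (Abbasi-Yadkori, 2013; Chowdhury and Gopalan, 2017): under conditional $\sigma$-sub-Gaussianity of the noise it is bounded, with probability at least $1-\delta$, by $2\sigma^{2}\bigl(\tfrac{1}{2}\log\det(I+\tfrac{1}{\lambda^{2}}K_{Z^{t}})+\log(1/\delta)\bigr)\le 2\sigma^{2}\bigl(\Gamma_{k,\lambda}(t)+1+\log(1/\delta)\bigr)$, using the definition of maximum information gain and absorbing absolute constants into the trailing $+1$. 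Substituting into the triangle inequality and taking square roots yields the claim.

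The main obstacle is verifying the premise of the self-normalized concentration inequality, namely that $\{\varepsilon^i\}$ forms a martingale difference sequence conditionally $\sigma$-sub-Gaussian with respect to a filtration making each $z^i$ predictable; this is inherited from the hypotheses of the lemma. Everything else reduces to two operator identities and a one-line eigenvalue comparison.
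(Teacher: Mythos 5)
Your proof is correct and follows essentially the same route as the paper: the same signal/noise decomposition of $\mu^{t,f}$, the same eigenvalue comparison reducing the noise term to the self-normalized quantity $E_t^\top K_{Z^t}(K_{Z^t}+\lambda^2 I)^{-1}E_t$, and the same concentration inequality from \cite{chowdhury2017kernelized}. The only difference is cosmetic: you prove the contraction $\|\mu_f\|_{\Hc_k}\le\|f\|_{\Hc_k}$ inline via the push-through identity, where the paper cites it from \cite{vakili2021optimal}.
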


\begin{proof}
    We write $\mu^{t,f}$ as the sum of two terms where the first term is the predictor from noise-free observations and the second term is noise-dependent. Let $f_{Z^t}=[f(z^i)]_{i=1}^t$ and $E^t=[\varepsilon^i]_{i=1}^t$ be the vectors of function evaluations and noise terms, respectively.

\begin{align}\nn
    \mu^{t,f} &= k^{\top}_{Z^t}(K_{Z^t}+\lambda^2I)^{-1}Y_{Z^t}\\\nn
    &= k^{\top}_{Z^t}(K_{Z^t}+\lambda^2I)^{-1}(f_{Z^t}+E^t)\\
    &= k^{\top}_{Z^t}(K_{Z^t}+\lambda^2I)^{-1}f_{Z^t} +  k^{\top}_{Z^t}(K_{Z^t}+\lambda^2I)^{-1}E^t.
\end{align}
We bound the RKHS norm of the two terms on the right separately. For the first term, we have

\begin{equation*}
    \|k^{\top}_{Z^t}(K_{Z^t}+\lambda^2I)^{-1}f_{Z^t}\|_{\Hc_k}\le \|f\|_{\Hc_k}.
\end{equation*}
The inequality indicates that the RKHS norm of the predictor from noise-free observations is bounded by the RKHS norm of the target function. See, e.g., Lemmas~$3$ and $4$ in~\cite{vakili2021optimal}.

For the second term, we have
\begin{align}\nn
    \left\|k^{\top}_{Z^t}(K_{Z^t}+\lambda^2I)^{-1}E^t\right\|_{\Hc_k}  &= \sqrt{(E^t)^{\top}(K_{Z^t}+\lambda^2I)^{-1}K_{Z^t}(K_{Z^t}+\lambda^2I)^{-1}E^t}\\
    &\le \frac{1}{\lambda} \sqrt{(E^t)^{\top}K_{Z^t}(K_{Z^t}+\lambda^2I)^{-1}E^t}
\end{align}
The first line follows from the reproducing property of the RKHS. For the second line, note that $\frac{1}{\lambda^2}$ is an upper bound on the eigenvalues of $(K_{Z^t}+\lambda^2 I)^{-1}$, and both $K_{Z^t}$ and $(K_{Z^t}+\lambda I)^{-1}$ are symmetric and positive semi-definite. An upper bound on $\sqrt{(E^t)^{\top}K_{Z^t}(K_{Z^t}+\lambda I)^{-1}E^t}$ is established in Appendix C of \cite{chowdhury2017kernelized}. Specifically, it is shown that with probability at least $1-\delta$,

\begin{equation*}
    \sqrt{(E^t)^{\top}K_{Z^t}(K_{Z^t}+\lambda^2I)^{-1}E^t} \le \sigma \sqrt{2(\Gamma_{k,\lambda}(t)+1+\log(\frac{1}{\delta}))}. 
\end{equation*}

Putting these together, we obtain that with probability at least $1-\delta$,

\begin{equation*}
    \|\mu^{t,f}\|_{\Hc_k}\le \|f\|_{\Hc_k} + \frac{\sigma}{\lambda} \sqrt{2(\Gamma_{k,\lambda}(t)+1+\log(\frac{1}{\delta}))}.
\end{equation*}
    
\end{proof}

\section{Proof of Lemmas~\ref{lem:mig} (Maximum Information Gain) and~\ref{lem:cov_num} (Covering Number).}

We first introduce the projection of the RKHS on a lower dimensional RKHS that is used in the proof of both lemmas. We then present the proofs. 
Recall the Mercer theorem and the representation of kernel using Mercer eigenvalues and eigenfeatures. 
Using Mercer representation theorem, any  $f\in\Bc_R$ can be written as 
\begin{eqnarray}
f = \sum_{m=1}^\infty w_m\sqrt{\sigma_m}\phi_m,
\end{eqnarray}
with $\sum_{m=1}^{\infty}w_m^2 \le R^2$. For some $D\in\Nn$, let $\Pi_D[f]$ denote the projection of $f$ onto the $D$-dimensional RKHS corresponding to the first $D$ features with the largest eigenvalues
\begin{eqnarray}
\Pi_D[f] = \sum_{m=1}^D w_m\sqrt{\sigma_m}\phi_m.
\end{eqnarray}

Let us use the notations $\wb_D=[w_1, w_2,\cdots, w_D]^{\top}$ for the $D$-dimensional column vector of weights, $\phib_D(z)=[\phi_1(z), \phi_2(z), \cdots, \phi_D(z)]^{\top}$ for the $D$-dimensional column vector of eigenfeatures, and $\Sigma_D=\text{diag}([\sigma_1, \sigma_2, \cdots, \sigma_D])$ for the diagonal matrix of eigenvalues with $[\sigma_1, \sigma_2, \cdots, \sigma_D]$ as the diagonal entries. 
We also use the notations 
\begin{eqnarray}
    k^D(z,z') = \phib_D^{\top}(z)\Sigma_D\phib_D(z),
\end{eqnarray}
to denote the kernel corresponding to the $D$-dimensional RKHS, as well as
$k^{0}(z,z')=k(z,z')-k^D(z,z')$.

\subsection{Proof of Lemma~\ref{lem:mig} on Maximum Information Gain}

Recall the definition of $\Gamma_{k,\lambda}(t)$. We have

\begin{eqnarray}\nn
\frac{1}{2}\log\det\left(I+\frac{1}{\lambda^2}K_{Z^t}\right)
&=& \frac{1}{2}\log\det\left(I+\frac{1}{\lambda^2}(K^D_{Z^t}+K^0_{Z^t})\right)\\\nn
&=& \underbrace{\frac{1}{2}\log\det\left(I+\frac{1}{\lambda^2}K^D_{Z^t}\right)}_{\text{Term}~(i)}
+ \underbrace{\frac{1}{2}\log\det\left(I+\frac{1}{\lambda^2}(I+\frac{1}{\lambda^2}K^D_{Z^t})^{-1}K^0_{Z^t}\right)}_{\text{Term}~(ii)}. 
\end{eqnarray}

We next bound the two terms on the right hand side. 

\textbf{Term~$(i)$:}
Note that for $k^D$ corresponding to the $D$-dimensional RKHS, we have $K^D_{Z^t}=\Phib_t\Sigma_D\Phib^{\top}_t$, where $\Phib_t=[\phib_D(z)]^{\top}_{z\in Z^t}$ is a $t\times D$ matrix that stacks the feature vectors $\phib_D(z^{\tau})$, $\tau=1,\cdots, t$, as it rows. 
By Weinstein–Aronszajn identity~\citep{pozrikidis2014introduction} (a special case of matrix determinant lemma),
\begin{eqnarray}
    \log\det\left(I^t+\frac{1}{\lambda^2}K^D_{Z^t}\right) &=& \log\det\left(I^t+\frac{1}{\lambda^2}\Phib_t\Sigma_D\Phib^{\top}_t\right)\\\nn
    &=& \log\det\left(I^D+\frac{1}{\lambda^2}\Sigma_D^{\frac{1}{2}}\Phib_t\Phib^{\top}_t\Sigma_D^{\frac{1}{2}}\right)\\\nn
    &\le& D\log(\frac{\tr(I^D+\frac{1}{\lambda^2}\Sigma_D^{\frac{1}{2}}\Phib_t\Phib^{\top}_t\Sigma_D^{\frac{1}{2}})}{D})\\\nn
    &\le&D\log(1+\frac{t}{\lambda^2D}).
\end{eqnarray}

The first inequality follows from the inequality of arithmetic and geometric means on eigenvalues of the argument, and the second inequality follows from $k^D\le 1$. For clarity, we used the notations $I^t$ and $I^D$ for identity matrices of dimension $t$ and $D$, respectively. Otherwise, we drop the superscript.

\textbf{Term~$(ii)$:} Similarly using 
the inequality of arithmetic and geometric means on eigenvalues, we bound the $\log\det$ by the $\log$ of the trace of the argument. Let us use $\epsilon_D$ to denote an upper bound on $k^0$. 
\begin{eqnarray}
    \log\det\left(I+\frac{1}{\lambda^2}(I+\frac{1}{\lambda^2}K^D_{Z^t})^{-1}K^0_{Z^t}\right) &\le& t\log\left( \frac{\tr(I+\frac{1}{\lambda^2}(I+\frac{1}{\lambda^2}K^D_{Z^t})^{-1}K^0_{Z^t})}{t} \right)\\\nn
    &\le& t\log(1+\frac{\epsilon_D}{\lambda^2})\\\nn
    &\le&\frac{t\epsilon_D}{\lambda^2}.
\end{eqnarray}
The last inequality uses $\log(1+x)\le x$ which holds for all $x\in\Rr$. 

Combining the bounds on Term~$(i)$ and Term~$(ii)$, we have
\begin{eqnarray}
    \Gamma_{k,\lambda}(t)\le \frac{D}{2}\log(1+\frac{t}{\lambda^2D}) + \frac{t\epsilon_D}{2\lambda^2}.
\end{eqnarray}

Now, using the polynomial eigendecay profile given in Definition~\ref{def:mig}, 
\begin{eqnarray}
    k^0(z,z') &=& \sum_{m=D+1}^\infty \sigma_m\phi_m(z)\phi_m(z')\\\nn
    &\le& C_1^2C_p\rho_{\Zc}^{\alpha}\sum_{m=D+1}^\infty m^{-p(1-2\eta)}\\\nn
    &\le& C_1^2C_p \rho_{\Zc}^{\alpha}\int_{D}^{\infty}x^{-\pt}dx\\\label{eq:eps}
    &\le& \frac{C_1^2C_p\rho_{\Zc}^{\alpha}}{\pt-1}D^{-\pt+1}. 
\end{eqnarray}
The constant $C_1$ is the uniform bound on $m^{-p\eta }\phi_m$, and $C_p$ is the parameter in Definition~\ref{def:pol}.

Choosing $D=Ct^{\frac{1}{\pt}}\rho_{\Zc}^{\frac{\alpha}{\pt}}(\log(t))^{-\frac{1}{\pt}}$, with constant $C=(\frac{C_1^2C_{p}}{(\pt-1)\lambda^2})^{\frac{1}{\pt}}$ we obtain
\begin{eqnarray}
\Gamma_{k,\lambda}(t) \le
\frac{C}{2}t^{\frac{1}{\pt}}\rho_{\Zc}^{\frac{\alpha}{\pt}}\left(
\log(t)^{-\frac{1}{\pt}}\log(1+\frac{t}{\lambda^2D})+
(\log(t))^{1-\frac{1}{\pt}}\right),
\end{eqnarray}
that completes the proof.

\subsection{Proof of Lemma~\ref{lem:cov_num} on Covering Number of State-Action Value Function Class }

Recall the definition of the state-action value function class, 
\begin{equation}\nn
\Qc_{k,h}(R, B) = \big\{Q: Q(z)=\min\left\{Q_0(z)+\beta b(z),H-h+1\right\}, \|Q_0\|_{\Hc_k}\le R, \beta\le B, |Z|\le T \big\}.
\end{equation}
and the notation $\Nc_{k,h}(\epsilon;R,B)$ for its $\epsilon$-covering number.
Let us use the notation $\Nc_{k,R}(\epsilon)$ for the $\epsilon$-covering number of RKHS ball $\Bc_{k,R}=\{f:\|f\|_{\Hc_k}\le R\}$, $\Nc_{[0,B]}(\epsilon)$ for the $\epsilon$-covering number of interval $[0,B]$ with respect to Euclidean distance, and $\Nc_{k,\bm{b}}(\epsilon)$ for the $\epsilon$-covering number of class of uncertainty functions $\bm{b}_k=\{b(z)=\left(k(z,z)-k^\top_Z(z)(K_Z+\lambda^2I)^{-1}k_Z(z)\right)^{\frac{1}{2}}, |Z|\le T\}$.

Consider $Q,\Qbar\in \Qc_{k,h}(R, B) $ where $Q(z)= \min\left\{Q_0(z)+\beta b(z),H-h+1\right\}$ and $\Qbar(z)= \min\left\{\Qbar_0(z)+\bar{\beta} \bar{b}(z),H-h+1\right\}$. We have
\begin{eqnarray}
    |Q(z)-\Qbar(z)| \le   |Q_0(z)-\Qbar_0(z)| + |\beta-\bar{\beta}| + B|b(z)-\bar{b}(z)|.
\end{eqnarray}

That implies 
\begin{eqnarray}\label{eq:disint}
    \Nc_{k,h}(\epsilon;R,B) \le \Nc_{k,R}(\frac{\epsilon}{3})\Nc_{[0,B]}(\frac{\epsilon}{3})\Nc_{k,\bm{b}}(\frac{\epsilon}{3B}).
\end{eqnarray}

For the $\epsilon$-covering number of the $[0,B]$ interval, we simply have $\Nc_{[0,B]}(\epsilon/3)\le1+3B/\epsilon$. 
In the next lemmas, we bound the $\epsilon$-covering number of the RKHS ball and the class of uncertainty functions.  

\begin{lemma}[RKHS Covering Number]\label{lem:RKHScov_num}
Consider a positive definite kernel $k:\Zc\times \Zc\rightarrow \Rr$, with polynomial eigendecay on a hypercube with side length $\rho_{\Zc}$. The $\epsilon$-covering number of $R$-ball in the RKHS satisfies
\begin{eqnarray}
    \log \Nc_{k,R}(\epsilon) =\Oc\left(\left(\frac{R^2\rho_{\Zc}^{\alpha}}{\epsilon^2}\right)^{\frac{1}{\pt-1}} \log(1+\frac{R}{\epsilon})\right).
\end{eqnarray}
\end{lemma}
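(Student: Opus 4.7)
The plan is to exploit the Mercer decomposition and split any $f \in \Bc_{k,R}$ into a finite-rank projection plus a uniformly small tail, then cover only the finite-dimensional piece by a Euclidean ball-covering.

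First I would write $f = \sum_{m=1}^\infty w_m \sqrt{\sigma_m}\phi_m$ with $\sum_m w_m^2 \le R^2$, and decompose $f = \Pi_D[f] + f^\perp$ for a truncation level $D$ to be chosen. For the tail, Cauchy--Schwarz gives, pointwise,
\begin{equation*}
|f^\perp(z)|
= \left|\sum_{m>D} w_m\sqrt{\sigma_m}\phi_m(z)\right|
\le \sqrt{\sum_{m>D} w_m^2}\,\sqrt{\sum_{m>D} \sigma_m \phi_m^2(z)}
\le R\sqrt{\sum_{m>D}\sigma_m\phi_m^2(z)}.
\end{equation*}
Using the polynomial eigendecay $\sigma_m \le C_p m^{-p}\rho_{\Zc}^\alpha$ together with uniform boundedness of $m^{-p\eta}\phi_m$ by $C_1$, exactly as in the information-gain computation around~\eqref{eq:eps}, the inner sum is bounded by $\frac{C_1^2 C_p \rho_{\Zc}^\alpha}{\pt-1} D^{-(\pt-1)}$. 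Therefore $\|f^\perp\|_{l_\infty} \le \epsilon/2$ as soon as $D$ is chosen of order $(R^2\rho_{\Zc}^\alpha/\epsilon^2)^{1/(\pt-1)}$ (up to constants depending on $C_1, C_p, \pt$).

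Next I would cover the projection. The map $f \mapsto \Pi_D[f]$ identifies $\{\Pi_D[f] : f \in \Bc_{k,R}\}$ with the weight vectors $\wb_D \in \Rr^D$ satisfying $\|\wb_D\|_2 \le R$, and the $l_\infty$-distance is controlled by the $\Hc_k$-distance via $\|g\|_{l_\infty} \le \|g\|_{\Hc_k}$ (using $k(z,z)\le 1$) so that $\|\Pi_D[f] - \Pi_D[g]\|_{l_\infty} \le \|\wb_D - \wb_D'\|_2$. A standard volumetric argument then bounds the $(\epsilon/2)$-covering number of the Euclidean $R$-ball in $\Rr^D$ by $(1 + 6R/\epsilon)^D$, so its log is $\Oc(D\log(1+R/\epsilon))$.

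Combining the two: any $f$ is within $l_\infty$-distance $\epsilon$ of the corresponding element in the Euclidean cover lifted back to $\Hc_k$, so
\begin{equation*}
\log \Nc_{k,R}(\epsilon) \;=\; \Oc\!\left( D\,\log\!\bpa{1+\tfrac{R}{\epsilon}} \right)
\;=\; \Oc\!\left(\Bpa{\tfrac{R^2\rho_{\Zc}^\alpha}{\epsilon^2}}^{\!\!\frac{1}{\pt-1}}\log\!\bpa{1+\tfrac{R}{\epsilon}}\right),
\end{equation*}
which is the claimed bound. No step is truly hard, but the one that must be handled carefully is the tail bound, since one has to use the \emph{pointwise} bound $\sigma_m\phi_m^2(z)\le C_1^2 C_p \rho_{\Zc}^\alpha m^{-\pt}$ (not just $\sigma_m \le C_p m^{-p}\rho_{\Zc}^\alpha$) in order to get an $l_\infty$ tail estimate and to correctly surface the exponent $\pt = p(1-2\eta)$ rather than $p$.
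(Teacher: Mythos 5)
Your proposal is correct and follows essentially the same route as the paper: decompose $f$ via the Mercer basis into a rank-$D$ projection plus a tail, bound the tail in $l_\infty$ by Cauchy--Schwarz combined with the decay $\sigma_m\phi_m^2(z)\le C_1^2C_p\rho_{\Zc}^{\alpha}m^{-\pt}$ to fix $D=\Oc\bpa{(R^2\rho_{\Zc}^{\alpha}/\epsilon^2)^{1/(\pt-1)}}$, and cover the $D$-dimensional weight ball with a $\log$-covering number of $\Oc(D\log(1+R/\epsilon))$. The only cosmetic difference is that you make the finite-dimensional covering step explicit (via $\|g\|_{l_\infty}\le\|g\|_{\Hc_k}$ and a volumetric bound) where the paper cites \cite{yang2020provably}; your emphasis on using the pointwise bound to surface $\pt$ rather than $p$ is exactly the right care point.
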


\begin{lemma}[Uncertainty Class Covering Number]\label{lem:Uncer_cov_num}
Consider a positive definite kernel $k:\Zc\times \Zc\rightarrow \Rr$, with polynomial eigendecay on a hypercube with side length $\rho_{\Zc}$. The $\epsilon$-covering number of the class of uncertainty functions satisfies 
\begin{eqnarray}
    \log \Nc_{k,\bm{b}}(\epsilon) =\Oc\left((\frac{\rho^{\alpha}_{\Zc}}{\epsilon^2})^{\frac{2}{\pt-1}}(1+\log(\frac{1}{\epsilon}))\right)
\end{eqnarray}

\end{lemma}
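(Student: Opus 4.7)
My plan is to reduce the covering of $\bm{b}_k$ to that of a finite-dimensional positive semidefinite matrix, via a truncation in the Mercer feature basis.

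First, I would rewrite the uncertainty function in the Mercer basis. Setting $\psi_m(z) = \sqrt{\sigma_m}\phi_m(z)$, so that $k(z,z') = \psi(z)^\top \psi(z')$, and letting $\Psi_Z$ stack the vectors $\psi(z^\tau)^\top$ as rows, the push-through identity yields
\begin{equation*}
b^2(z) \;=\; \lambda^2\,\psi(z)^\top \bigl(\Psi_Z^\top \Psi_Z + \lambda^2 I\bigr)^{-1}\psi(z).
\end{equation*}
This expresses $b^2$ as a quadratic form parameterized by a single self-adjoint operator whose spectrum is bounded in $[0,\lambda^{-2}]$, and it is this operator that we will ultimately cover.

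Second, I would truncate to the first $D$ Mercer features. Let $\psi_D(z) \in \mathbb{R}^D$ be the projection of $\psi(z)$ onto the top $D$ coordinates, set $\Psi_{D,Z}$ analogously, and define
\begin{equation*}
\tilde b^2(z) \;=\; \lambda^2\,\psi_D(z)^\top \bigl(\Psi_{D,Z}^\top \Psi_{D,Z} + \lambda^2 I_D\bigr)^{-1}\psi_D(z).
\end{equation*}
The tail $k^0(z,z') = k(z,z') - k^D(z,z')$ is controlled via the polynomial eigendecay: as in the proof of Lemma~\ref{lem:mig}, $|k^0(z,z')| \le \epsilon_D = O(\rho_\Zc^\alpha\, D^{-(\tilde p-1)})$ uniformly. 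Writing $b^2 - \tilde b^2$ as the difference of two quadratic forms and applying the resolvent identity, together with the operator-norm bound $\lambda^{-2}$ on both inverse operators, gives $|b^2(z) - \tilde b^2(z)| = O(\epsilon_D)$ uniformly in $z$ and $Z$.

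Third, I would cover the set of $D\times D$ symmetric positive semidefinite matrices $M$ satisfying $0 \preceq M \preceq \lambda^{-2} I_D$ at resolution $\epsilon'$ in operator norm; a standard volume argument yields cardinality at most $(c/\epsilon')^{D^2}$. Since $\|\psi_D(z)\|^2 \le k(z,z) \le 1$, the map $M \mapsto \lambda^2 \psi_D(z)^\top M \psi_D(z)$ is $\lambda^2$-Lipschitz in operator norm uniformly in $z$, so taking $\epsilon' = \Theta(\epsilon^2/\lambda^2)$ produces an $\epsilon^2$-cover of $\tilde b^2$ in sup norm. Setting $D = \Theta\bigl((\rho_\Zc^\alpha/\epsilon^2)^{1/(\tilde p-1)}\bigr)$ makes $\epsilon_D = O(\epsilon^2)$, so combining yields an $\epsilon^2$-cover of $b^2$ in sup norm. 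The elementary inequality $|b_1 - b_2| \le \sqrt{|b_1^2 - b_2^2|}$, valid for $b_1,b_2 \ge 0$, then upgrades this to an $\epsilon$-cover of $b$ itself. Taking logarithms gives $\log \Nc_{k,\bm{b}}(\epsilon) = O\bigl(D^2 \log(1/\epsilon')\bigr)$, which rearranges to the claimed bound. The main obstacle I anticipate is the truncation step: a naive bound of $\|\Psi_Z^\top\Psi_Z - \Psi_{D,Z}^\top\Psi_{D,Z}\|_{\mathrm{op}}$ grows linearly in $|Z|$, so the resolvent calculation must carefully exploit that the tail perturbation acts within the orthogonal complement of the top-$D$ features---where the inverse operator behaves like $\lambda^{-2}I$---so that the pointwise tail bound $\epsilon_D$ rather than $|Z|\epsilon_D$ drives the error. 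Without this refinement an unwanted polynomial factor in $T$ would contaminate the final covering number.
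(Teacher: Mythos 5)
Your overall architecture---reduce the cover of $\bm{b}_k$ to a cover of $\bm{b}_k^2$, truncate in the Mercer basis using the polynomial eigendecay, and cover the resulting $D\times D$ matrix class at cost $D^2\log(1/\epsilon)$---is the same as the paper's (which outsources the finite-dimensional matrix cover to Lemma D.3 of Yang et al.). The gap is in your second step. Write $b^2(z)=\lambda^2\psi(z)^\top(\Psi_Z^\top\Psi_Z+\lambda^2 I)^{-1}\psi(z)=\psi(z)^\top M\psi(z)$ with $0\preceq M\preceq I$, and split $\psi(z)=\psi_D(z)+\psi_\perp(z)$. The tail--tail term $\psi_\perp^\top M_{22}\psi_\perp$ is indeed $O(\epsilon_D)$, and the $|Z|$-dependence you worry about only affects the comparison of the block $[M]_{DD}$ with the truncated-kernel inverse. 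But the cross term $2\psi_D(z)^\top M_{12}\psi_\perp(z)$ is bounded by $2\|M\|_{\mathrm{op}}\|\psi_D(z)\|\,\|\psi_\perp(z)\|\le 2\sqrt{\epsilon_D}$, and no refinement of the resolvent calculation improves this: $\|\psi_D(z)\|$ is of order one while $\|\psi_\perp(z)\|=\sqrt{k^0(z,z)}\le\sqrt{\epsilon_D}$, and Cauchy--Schwarz in the $(K_{Z}+\lambda^2I)^{-1}$ inner product confirms $\sqrt{\epsilon_D}$ is the generic order. So $|b^2(z)-\tilde b^2(z)|=O(\sqrt{\epsilon_D})$, not $O(\epsilon_D)$. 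Since you need an $\epsilon^2$-cover of $b^2$ before taking square roots, this forces $\epsilon_D=O(\epsilon^4)$, hence $D=\Theta\bigl((\rho_\Zc^\alpha/\epsilon^4)^{1/(\pt-1)}\bigr)$ and a final bound of order $(\rho_\Zc^\alpha/\epsilon^4)^{2/(\pt-1)}$---strictly worse than the lemma, and bad enough to break the downstream requirement $\log\Nc_{k,h}=O(\log T)$ in the regret proof.

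The fix is to truncate at the level of $b$ rather than $b^2$: since $b(z)=\lambda\|(\Psi_Z^\top\Psi_Z+\lambda^2I)^{-1/2}\psi(z)\|$, the triangle inequality for norms gives $\bigl|b(z)-\lambda\|(\Psi_Z^\top\Psi_Z+\lambda^2I)^{-1/2}\psi_D(z)\|\bigr|\le\lambda\cdot\lambda^{-1}\|\psi_\perp(z)\|\le\sqrt{\epsilon_D}$, so the tail is charged against a target of $\epsilon$ rather than $\epsilon^2$, and $\epsilon_D=O(\epsilon^2)$, i.e.\ $D=\Theta\bigl((\rho_\Zc^\alpha/\epsilon^2)^{1/(\pt-1)}\bigr)$, suffices. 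The retained piece is $\lambda(\psi_D(z)^\top N\psi_D(z))^{1/2}$ with $N$ the top-left $D\times D$ block of $(\Psi_Z^\top\Psi_Z+\lambda^2I)^{-1}$, satisfying $0\preceq N\preceq\lambda^{-2}I_D$; your matrix cover and your inequality $|b_1-b_2|\le\sqrt{|b_1^2-b_2^2|}$ then apply to this finite-dimensional piece alone and yield the stated bound. (For what it is worth, the paper's own write-up sidesteps the cross term by asserting a diagonal representation $(b(z))^2=\sum_m\gamma_m\sigma_m\phi_m^2(z)$ with constants $\gamma_m\in[0,1]$, which is likewise not literally valid since $(\Psi_Z^\top\Psi_Z+\lambda^2I)^{-1}$ is not diagonal in the eigenfeature basis; the rearrangement above is what makes the argument rigorous.)
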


Combining~\eqref{eq:disint} with Lemmas~\ref{lem:RKHScov_num} and~\ref{lem:Uncer_cov_num}, we obtain

    \begin{eqnarray}
    \log \Nc_{k,h}(\epsilon; R, B) =\Oc\left( (\frac{R^2\rho^{\alpha}_{\Zc}}{\epsilon^2})^{\frac{1}{\pt-1}}(1+\log(\frac{R}{\epsilon})) + (\frac{B^2\rho^{\alpha}_{\Zc}}{\epsilon^2})^{\frac{2}{\pt-1}}(1+\log(\frac{B}{\epsilon}))\right),
\end{eqnarray}

that completes the proof of Lemma~\ref{lem:cov_num}.
Next, we provide the proof of two lemmas above on the covering numbers of the RKHS ball and the uncertainty function class.

\begin{proof}[Proof of Lemma~\ref{lem:RKHScov_num}]

For $f$ in the RKHS, recall the following representation
\begin{eqnarray}
f = \sum_{m=1}^\infty w_m\sqrt{\sigma_m}\phi_m,
\end{eqnarray}
as well as its projection on the $D$-dimensional RKHS
\begin{eqnarray}
\Pi_D[f] = \sum_{m=1}^D w_m\sqrt{\sigma_m}\phi_m.
\end{eqnarray}
We note that
\begin{eqnarray}\nn
\|f-\Pi_D[f]\|_{\infty}&=&
\sum_{m=D+1}^{\infty} w_m\sqrt{\sigma_m}\phi_m\\\nn
&\le& C_1 C_p^{\frac{1}{2}}\rho_{\Zc}^{\alpha/2}\sum_{m=D+1}^{\infty} |w_m|m^{-p(\frac{1}{2}-\eta)}\\\nn
&\le& C_1C_p^{\frac{1}{2}}\rho_{\Zc}^{\alpha/2} \left(\sum_{m=D+1}^{\infty} |w_m|^2\right)^{\frac{1}{2}}
\left(
\sum_{m=D+1}^{\infty }m^{-p(1-2\eta)}
\right)^{\frac{1}{2}}
\\\nn
&\le& C_1C_p^{\frac{1}{2}}\rho_{\Zc}^{\alpha/2} R\left(\int_{D}^{\infty}x^{-\pt}dx\right)^{\frac{1}{2}}\\\nn
&=&\frac{C_1C_p^{\frac{1}{2}}\rho_{\Zc}^{\alpha/2} R}{\sqrt{\pt-1}}D^{\frac{-\pt+1}{2}}.
\end{eqnarray}

In the expressions above, $C_1$ is the uniform bound on $m^{-p\eta}\phi_m$, and $C_p$ is the constant specified in Definition~\ref{def:pol}. The third inequality follows form Cauchy–Schwarz inequality.

Now, let $D_0$ be the smallest $D$ such that the right hand side is bounded by $\frac{\epsilon}{2}$. There exists a constant $C_2>0$, only depending on constants $C_1$, $C_p$, $\eta$ and $\pt$, such that 
\begin{eqnarray}
    D_0\le C_2\pa{\frac{R^2\rho_{\Zc}^{\alpha}}{\epsilon^2}}^{\frac{1}{\pt-1}}.
\end{eqnarray}

For a $D$-dimensional linear model, where the norm of the weights is bounded by $R$, the $\epsilon$-covering is at most $C_3 D(1+\log(\frac{R}{\epsilon})$, for some constant $C_3$~\citep[e.g., see,][]{yang2020provably}. Using an $\epsilon/2$ covering number for the space of $\Pi_D[f]$ and using the minimum number of dimensions that ensures $|f-\Pi_D[f]|\le\epsilon/2$, we conclude that
\begin{eqnarray}\nn
    \log\Nc_{k,R}(\epsilon) &\le& C_3D_0(1+\log(\frac{R}{\epsilon}))\\\nn
    &\le& C_2C_3\pa{\frac{R^2\rho_{\Zc}^{\alpha}}{\epsilon^2}}^{\frac{1}{\pt-1}}(1+\log(\frac{R}{\epsilon})),
\end{eqnarray}

that completes the proof of the lemma. 
    
\end{proof}

\begin{proof}[Proof of Lemma~\ref{lem:Uncer_cov_num}]

Let us define $\bm{b}_k^2=\{b^2:b\in\bm{b}_k\}$ and $\Nc_{k,\bm{b}^2}({\epsilon})$ to be its $\epsilon$-covering number. 
We note that, for $b,\bar{b}\in\bm{b}$,
\begin{eqnarray}
   | b(z) - \bar{b}(z) | \le \sqrt{|(b(z))^2 - (\bar{b}(z))^2|}.    
\end{eqnarray}
Thus, an $\epsilon$-covering number of $\bm{b}$ is bounded by an $\epsilon^2$-covering of $\bm{b}^2$:
\begin{equation}
    \Nc_{k,\bm{b}}({\epsilon}) \le \Nc_{k,\bm{b}^2}({\epsilon^2}).
\end{equation}

We next bound $\Nc_{k,\bm{b}^2}({\epsilon^2})$.

Using the feature space representation of the kernel, we obtain
\begin{equation}
    (b(z))^2 = \sum_{m=1}^\infty \gamma_m\sigma_m\phi^2_m(z),
\end{equation}
for some $\gamma_m\in[0,1]$. Based on the GP interpretation of the model, $\gamma_m$ can be understood as the posterior variances of the weights. Let $D_0$ be the smallest $D$ such that $\sum_{m=D+1}^\infty \sigma_m\phi^2_m(z)\le \epsilon^2/2$. From Equation~\eqref{eq:eps}, we can see that, for some constant $C_4$, only depending on constants $C_1, C_p$, $\eta$ and $\pt$, 
\begin{equation}\label{eq:d0}
    D_0\le C_4\left(\frac{\rho_{\Zc}^{\alpha}}{\epsilon^2}\right)^{\frac{1}{\pt-1}}.
\end{equation}

For $\sum_{m=1}^{D_0} \gamma_m\sigma_m\phi^2_m(z)$ on a finite $D_0$-dimensional spectrum, as shown in Lemma~$D.3$ of \cite{yang2020provably}, an $\epsilon^2/2$ covering number scales with $D_0^2$. Specifically, an $\epsilon^2/2$ covering number of the space of $\sum_{m=1}^{D_0} \gamma_m\sigma_m\phi^2_m(z)$ is bounded by 
\begin{equation}\label{eq:cp}
C_5D_0^2(1+\log(\frac{1}{\epsilon})).
\end{equation}

Combining Equations~\eqref{eq:d0} and~\eqref{eq:cp}, we obtain

\begin{eqnarray}\nn
    \Nc_{k,\bm{b}^2}({\epsilon^2}) &\le& C_5D_0^2(1+\log(\frac{1}{\epsilon}))\\\nn
    &\le& C_5C_4^2\left(\frac{\rho_{\Zc}^{\alpha}}{\epsilon^2}\right)^{\frac{2}{\pt-1}},
\end{eqnarray}
that completes the proof of the lemma.




\end{proof}

\section{Proof of Theorem~\ref{the:main} (Regret of $\pi$-KRVI).}\label{appx:reg}

Following the standard analysis of optimisitc LSVI policies, for any $h \in [H]$, $t \in [T]$,  we define temporal difference error $\delta_h^t: \Zc \to \real$ as

\begin{eqnarray}
\delta_h^t(z) = r_h(z) + [P_hV_{h+1}^t](z) - Q_h^t(z), ~~\forall z\in\Zc.
\end{eqnarray}

Roughly speaking, $\{\delta_h^t(z)\}_{h=1}^H$ quantify how far the $\{Q_h^t\}_{h=1}^H$ are from satisfying the Bellman optimality equation. 

For any $h \in [H]$, $t \in [T]$ , we also define 

\begin{eqnarray}\nn
\xi_{h}^t  &=& \left(V_h^t(s_h^t) - V_h^{\pi^t}(s_h^t)\right) - \left(Q_h^t(z_h^t) - Q_h^{\pi^t}(z_h^t)\right),\\
\zeta_h^t &=& \left([P_hV_{h+1}^t](z_h^t) - [P_hV_{h+1}^{\pi^t}](z_h^t)\right) - \left(V_{h+1}^t(s_{h+1}^t) - V_{h+1}^{\pi^t}(s_{h+1}^t)\right).
\end{eqnarray}

Using the notation defined above, we then have the following regret decomposition into three parts. 

\begin{lemma}[Lemma $5.1$ in~\cite{yang2020provably} on regret decomposition]\label{lem:terms}

We have 

\begin{eqnarray}\nn
&&\hspace{-5em}\Rc(T) = \underbrace{\sum_{t=1}^T\sum_{h=1}^H \E_{\pi^\star}[\delta_h^t(z_h)|s_1=s_1^t] - \delta_h^t(z_h^t)}_{(i)}
+ \underbrace{\sum_{t=1}^T\sum_{h=1}^H(\xi_h^t+\zeta_h^t)}_{(ii)}\\
&&+ \underbrace{\sum_{t=1}^T\sum_{h=1}^H
\E_{\pi^\star} [Q_h^t(s_h, \pi^\star_h(s_h)) - Q_h^t(s_h, \pi^t_h(s_h)) |s_1=s_1^t ]}_{(iii)}.
\end{eqnarray}
\end{lemma}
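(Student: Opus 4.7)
The plan is to decompose the per-episode regret $V_1^\star(s_1^t) - V_1^{\pi^t}(s_1^t)$ by introducing the algorithm's own value estimate $V_1^t(s_1^t)$ as a pivot,
\begin{equation}
V_1^\star(s_1^t) - V_1^{\pi^t}(s_1^t) = \bigl(V_1^\star(s_1^t) - V_1^t(s_1^t)\bigr) + \bigl(V_1^t(s_1^t) - V_1^{\pi^t}(s_1^t)\bigr),
\end{equation}
and then unroll each difference step by step using Bellman-type recursions, one along the actually-executed trajectory and one along a hypothetical $\pi^\star$-trajectory starting from the same initial state $s_1^t$.

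First I would handle $V_1^t(s_1^t) - V_1^{\pi^t}(s_1^t)$ along the realized trajectory $\{(s_h^t,a_h^t)\}_{h=1}^H$. By the very definition of $\xi_h^t$, $V_h^t(s_h^t) - V_h^{\pi^t}(s_h^t) = Q_h^t(z_h^t) - Q_h^{\pi^t}(z_h^t) + \xi_h^t$. Using the Bellman equation $Q_h^{\pi^t}(z_h^t) = r_h(z_h^t) + [P_h V_{h+1}^{\pi^t}](z_h^t)$ and the definition of $\delta_h^t$, the $Q$-difference equals $-\delta_h^t(z_h^t) + [P_h(V_{h+1}^t - V_{h+1}^{\pi^t})](z_h^t)$. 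Adding and subtracting the realized successor-value difference $V_{h+1}^t(s_{h+1}^t) - V_{h+1}^{\pi^t}(s_{h+1}^t)$ reproduces exactly $\zeta_h^t$, so
\begin{equation}
V_h^t(s_h^t) - V_h^{\pi^t}(s_h^t) = -\delta_h^t(z_h^t) + \xi_h^t + \zeta_h^t + \bigl(V_{h+1}^t(s_{h+1}^t) - V_{h+1}^{\pi^t}(s_{h+1}^t)\bigr).
\end{equation}
Telescoping from $h=1$ to $H$ with $V_{H+1}^t = V_{H+1}^{\pi^t} = 0$ yields $V_1^t(s_1^t) - V_1^{\pi^t}(s_1^t) = -\sum_{h=1}^H \delta_h^t(z_h^t) + \sum_{h=1}^H (\xi_h^t + \zeta_h^t)$.

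Next I would handle $V_1^\star(s_1^t) - V_1^t(s_1^t)$ along the $\pi^\star$-trajectory. Writing $z_h = (s_h, \pi^\star_h(s_h))$, I split $V_h^\star(s_h) - V_h^t(s_h) = Q_h^\star(z_h) - Q_h^t(s_h, \pi^t_h(s_h))$ by pivoting through $Q_h^t(z_h)$, which isolates the suboptimality term $Q_h^t(s_h, \pi^\star_h(s_h)) - Q_h^t(s_h, \pi^t_h(s_h))$ feeding (iii), and leaves $Q_h^\star(z_h) - Q_h^t(z_h) = \delta_h^t(z_h) + [P_h(V_{h+1}^\star - V_{h+1}^t)](z_h)$ from the Bellman optimality equation. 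Taking conditional expectation under $\pi^\star$ given $s_1 = s_1^t$ and applying the tower rule (so that $[P_h g](z_h)$ becomes $\E_{\pi^\star}[g(s_{h+1}) \mid s_1 = s_1^t]$) produces a clean recursion in $h$ that telescopes from $h=1$ to $H$ and yields the $\pi^\star$-expectation pieces of (i) and (iii).

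Combining the two decompositions, the $-\sum_h \delta_h^t(z_h^t)$ from the trajectory unrolling pairs with $\sum_h \E_{\pi^\star}[\delta_h^t(z_h) \mid s_1 = s_1^t]$ to form (i), the $\xi_h^t + \zeta_h^t$ terms form (ii), and the greedy-suboptimality terms form (iii); summing over $t \in [T]$ recovers $\Rc(T)$ in the stated three-term form. Since this is an exact identity rather than an inequality, no concentration argument is needed; the only bookkeeping obstacle is applying the tower rule consistently so that each nested $[P_h \cdot]$ becomes the correct conditional expectation under $\pi^\star$, and ensuring the signs on $\delta_h^t$ combine correctly when the two telescopes are added.
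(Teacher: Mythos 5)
Your proof is correct and is essentially the standard derivation that the paper invokes by citation (Lemma 5.1 of Yang et al., 2020): pivot through $V_1^t$, unroll the realized-trajectory difference via the definitions of $\delta_h^t$, $\xi_h^t$, $\zeta_h^t$, unroll the $\pi^\star$-side via the Bellman optimality equation and the tower rule, and telescope. The signs and the pairing of the two $\delta$-sums into term $(i)$ check out, so nothing is missing.
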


The third term is negative, by definition of $\pi_h^t$ that is the greedy policy with respect to $Q_h^t$:
\begin{align*}
    Q_h^t(s_h, \pi^\star_h(s_h)) - Q_h^t(s_h, \pi^t_h(s_h)) = Q_h^t(s_h, \pi^\star_h(s_h)) -  \max_{a\in \Ac}Q_h^t(s_h, a) \le 0, 
\end{align*}
for all $s_h \in \Sc$.
The second term is bounded using the following lemma. 

\begin{lemma}[Lemma $5.3$ in \cite{yang2020provably}]
    For any $\delta\in (0,1)$, with probability at least $1-\delta$, we have

\begin{eqnarray}
\sum_{t=1}^T\sum_{h=1}^H(\xi_h^t+\zeta_h^t) \le 4\sqrt{TH^3\log\pa{\frac{2}{\delta}}}. 
\end{eqnarray}
    
\end{lemma}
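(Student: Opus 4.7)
The result is a standard Azuma–Hoeffding concentration on a martingale difference sequence, so the plan is to identify the right filtration, check the martingale and boundedness properties, and invoke Azuma–Hoeffding.

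My plan is to interleave the $\xi$ and $\zeta$ terms into a single martingale difference sequence of length $2TH$ with respect to the natural filtration of the algorithm. Specifically, I would order the indices lexicographically as $(t,h)$ and, within each $(t,h)$, place $\xi_h^t$ before $\zeta_h^t$. Let $\Fc_{h,t}^{\mathrm{s}}$ be the $\sigma$-algebra generated by all observations through the end of episode $t-1$ together with $s_1^t, a_1^t, s_2^t, a_2^t, \dots, s_h^t$ (i.e.\ information up to observing state $s_h^t$ but before drawing $a_h^t$), and let $\Fc_{h,t}^{\mathrm{a}}$ be the analogous $\sigma$-algebra that additionally includes $a_h^t$. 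The key observation is that $Q_h^t$ and $V_h^t$ are $\Fc_{h,t}^{\mathrm{s}}$-measurable since they are computed from data available strictly before step $h$ of episode $t$.

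Next I would verify the martingale property. For $\zeta_h^t$, conditioning on $\Fc_{h,t}^{\mathrm{a}}$ fixes $z_h^t$, and then $s_{h+1}^t \sim P_h(\cdot|z_h^t)$, so by definition of $[P_h V](z) = \E_{s' \sim P_h(\cdot|z)}[V(s')]$ we get $\E[V_{h+1}^t(s_{h+1}^t) - V_{h+1}^{\pi^t}(s_{h+1}^t)\mid \Fc_{h,t}^{\mathrm{a}}] = [P_hV_{h+1}^t](z_h^t) - [P_hV_{h+1}^{\pi^t}](z_h^t)$, which gives $\E[\zeta_h^t \mid \Fc_{h,t}^{\mathrm{a}}] = 0$. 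For $\xi_h^t$, conditioning on $\Fc_{h,t}^{\mathrm{s}}$ and averaging over the action drawn according to $\pi_h^t(\cdot \mid s_h^t)$ yields $\E[Q_h^t(z_h^t) - Q_h^{\pi^t}(z_h^t) \mid \Fc_{h,t}^{\mathrm{s}}] = V_h^t(s_h^t) - V_h^{\pi^t}(s_h^t)$ by the definitions $V_h^t(s) = \E_{a \sim \pi_h^t(\cdot|s)}[Q_h^t(s,a)]$ and $V_h^{\pi^t}(s) = \E_{a \sim \pi_h^t(\cdot|s)}[Q_h^{\pi^t}(s,a)]$, hence $\E[\xi_h^t \mid \Fc_{h,t}^{\mathrm{s}}] = 0$. (For deterministic greedy $\pi^t$ the term $\xi_h^t$ is actually identically zero, but the martingale argument covers both cases uniformly.)

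Then I would bound each term. Since rewards lie in $[0,1]$, every value function takes values in $[0,H]$, so $|V_h^t - V_h^{\pi^t}| \le H$ and $|Q_h^t - Q_h^{\pi^t}| \le H$; this gives $|\xi_h^t| \le 2H$ and, using the analogous bound for $[P_hV]$, $|\zeta_h^t| \le 2H$. Thus we have $2TH$ martingale differences, each bounded in absolute value by $2H$.

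Finally I would apply Azuma–Hoeffding to the interleaved sequence to obtain
\begin{equation*}
\PP{\Bigl|\sum_{t=1}^T\sum_{h=1}^H(\xi_h^t + \zeta_h^t)\Bigr| \ge c} \le 2\exp\!\pa{-\frac{c^2}{2\cdot 2TH \cdot (2H)^2}} = 2\exp\!\pa{-\frac{c^2}{16 T H^3}},
\end{equation*}
and set $c = 4\sqrt{TH^3 \log(2/\delta)}$ so the right-hand side equals $\delta$. There is no real obstacle here beyond setting up the filtration cleanly; the only subtlety worth flagging is the ordering of $\xi$ and $\zeta$ so that each is a martingale difference with respect to a filtration that is refined by the subsequent term.
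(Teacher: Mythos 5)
Your proof is correct and follows essentially the same route as the cited source (Lemma 5.3 of Yang et al., which the paper invokes without reproving): interleave $\xi_h^t$ and $\zeta_h^t$ into a $2TH$-term martingale difference sequence bounded by $2H$, then apply Azuma--Hoeffding, and the constants work out exactly to $4\sqrt{TH^3\log(2/\delta)}$. No issues.
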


\textbf{Term~$(i)$:} It turns out that the dominant term and the challenging term to bound is the first term in Lemma~\ref{lem:terms}. We next provide an upper bound on this term.

For step $h$, let $\Uc_h^T=\bigcup_{t=1}^T \Sc_h^t$ be the union of all cover elements used by $\pi$-KRVI over all episodes. The size of $\Uc_h^T$ is bounded in the following lemma and is useful in the analysis of Term~$(i)$.

\begin{lemma}[Lemma~$2$ in \cite{janz2020bandit}]\label{lem:UT}
    The size of $\Uc_h^T$ satisfies
    \begin{eqnarray}
        |\Uc_h^T| \le C_6 T^{\frac{d}{d+\alpha}},
    \end{eqnarray}
for some constant $C_6$. 
\end{lemma}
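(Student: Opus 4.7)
\textbf{Proof proposal for Lemma~\ref{lem:UT}.}

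The plan is to organize the cells that ever appear in $\bigcup_t \Sc_h^t$ by the \emph{level} $\ell$ in the splitting tree: level $0$ is the root $[0,1]^d$, and a cell at level $\ell$ has side length $2^{-\ell}$, hence $\rho^{-\alpha} = 2^{\alpha \ell}$. Writing $U_\ell$ for the number of level-$\ell$ cells that are ever created and $S_\ell$ for the number of level-$\ell$ cells that are eventually split, we have $|\Uc_h^T| = \sum_{\ell \ge 0} U_\ell$ with $U_\ell \le 2^d S_{\ell-1}$ for $\ell \ge 1$, since each split produces $2^d$ children. Thus it suffices to bound $S_\ell$ at every level.

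The key combinatorial observation is that cells at the same level that are ever created occupy pairwise disjoint regions of $[0,1]^d$ (cells in any single $\Sc_h^t$ are disjoint by construction, and once a cell is split its children inherit disjoint subregions of its parent's region). By the splitting rule, any level-$\ell$ cell that is split must at the moment of splitting contain at least $\rho^{-\alpha} = 2^{\alpha \ell}$ observations. Since observations drop into at most one level-$\ell$ cell in the split-history, summing over the $S_\ell$ split cells gives
\begin{equation}
2^{\alpha \ell} \, S_\ell \;\le\; \sum_{\text{split level-}\ell \text{ cells}} N_h^{t_{\text{split}}}(\Zc') \;\le\; T,
\end{equation}
so $S_\ell \le T \cdot 2^{-\alpha \ell}$. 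We also have the trivial volumetric bound $U_\ell \le 2^{d\ell}$.

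The proof then balances the two bounds. Define $\ell^\star$ by $2^{(d+\alpha)\ell^\star} = T$, i.e.\ $\ell^\star = \tfrac{1}{d+\alpha}\log_2 T$. For $\ell \le \ell^\star$, use $U_\ell \le 2^{d\ell}$, and the geometric sum yields
\begin{equation}
\sum_{\ell=0}^{\lfloor \ell^\star \rfloor} U_\ell \;\le\; \sum_{\ell=0}^{\lfloor \ell^\star \rfloor} 2^{d\ell} \;\le\; \frac{2^d}{2^d-1}\,2^{d\ell^\star} \;=\; O\bigl(T^{d/(d+\alpha)}\bigr).
\end{equation}
For $\ell > \ell^\star$, use $U_\ell \le 2^d S_{\ell-1} \le 2^{d+\alpha}\, T \, 2^{-\alpha \ell}$; another geometric sum gives
\begin{equation}
\sum_{\ell > \ell^\star} U_\ell \;\le\; 2^{d+\alpha}\, T \sum_{\ell > \ell^\star} 2^{-\alpha \ell} \;=\; O\bigl(T \cdot 2^{-\alpha \ell^\star}\bigr) \;=\; O\bigl(T^{d/(d+\alpha)}\bigr).
\end{equation}
Adding the two contributions yields $|\Uc_h^T| \le C_6 T^{d/(d+\alpha)}$ for a constant $C_6$ depending only on $d$ and $\alpha$.

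The only delicate step is the disjointness claim used to sum observation counts across level-$\ell$ split cells: I would verify it by induction on the episode index, noting that $\Sc_h^t$ is a partition of $[0,1]^d$ at every $t$, that cells are only ever refined (never merged), and that each observation $z_h^\tau$ lies in exactly one cell at each level of the tree that contains it. Everything else reduces to the two elementary bounds on $S_\ell$ and $U_\ell$ and the geometric-series balancing at $\ell^\star$.
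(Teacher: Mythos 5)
The paper does not prove this lemma at all---it imports it verbatim as Lemma~2 of \cite{janz2020bandit}---so your proposal supplies a self-contained argument where the paper relies on a citation. Your proof is correct and is essentially the standard level-by-level tree argument underlying the cited result: disjointness of same-level cells gives $S_\ell \le T2^{-\alpha\ell}$, the volumetric bound gives $U_\ell \le 2^{d\ell}$, and balancing at $2^{(d+\alpha)\ell^\star}=T$ yields the claim. The only (harmless) imprecision is that the splitting condition $\rho_{\Zc'}^{-\alpha} < N+1$ guarantees $N \ge \lfloor 2^{\alpha\ell}\rfloor$ rather than $N \ge 2^{\alpha\ell}$ exactly, and points on shared faces of adjacent hypercubes should be assigned to a unique cell (e.g.\ via half-open cubes) to make the disjoint-count summation literal; both affect only the constant $C_6$.
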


The size of $\Uc_h^T$ depends on the dimension of the domain and the parameter $\alpha$ used in the splitting rule in Section~\ref{sec:KRVI_part}.



Now, consider a cover element $\Zc'\in \Uc_h^T$. Using Theorem~\ref{thm:con_int}, we have, with probability at least $1-\delta$, for all $h\in[H], t\in[T], z\in\Zc'$, for some $\epsilon_h^t\in(0,1)$,  
\begin{equation}
    \big|r_h(z)+[P_hV_{h+1}](z)-\Qhat_h^t(z)\big|\le \beta_{h}^t(\delta, \epsilon_h^t)b_h^t(z)+\epsilon_h^t,
\end{equation}

where $\beta_h^t(\delta,\epsilon_h^t )$
is the smallest value satisfying

\begin{eqnarray}\nn
    \beta_h^t(\delta,\epsilon_h^t ) \ge H+1+ \frac{H}{\sqrt{2}}\sqrt{\Gamma_{k,\lambda}(N)+\log \Nc_{k,h}(\epsilon_h^t; R_N, \beta_h^t(\delta,\epsilon_h^t ))+1+\log\pa{\frac{2NH}{\delta}}}+\frac{3\sqrt{N}\epsilon_h^t}{\lambda},
\end{eqnarray}
with $N=N_{h,\Zc'}^T$
and
$\epsilon_h^t=\frac{H\sqrt{\log(\frac{TH}{\delta})}}{\sqrt{N^T_{h,\Zc'}}}$.

We also note that
\begin{eqnarray}\nn
    \Gamma_{k,\lambda}(N_{h,\Zc'}^T) &=& \Oc\left((N_{h,\Zc'}^T)^{\frac{1}{\pt}}(\log(N_{h,\Zc'}^T))^{1-\frac{1}{\pt}}\rho_{\Zc'}^{\frac{\alpha}{\pt}}\right)\\\nn
    &=& \Oc\left((\rho_{\Zc'})^{\frac{-\alpha}{\pt}}(\log(N_{h,\Zc'}^T))^{1-\frac{1}{\pt}}\rho_{\Zc'}^{\frac{\alpha}{\pt}}\right)\\\nn
    &=&\Oc\left( (\log(N_{h,\Zc'}^T))^{1-\frac{1}{\pt}}\right)
    \\\label{eq:logtb}
    &=& \Oc\left(\log(T)\right),
\end{eqnarray}
where the first line is based on Lemma~\ref{lem:mig}, and the second line is by the design of partitioning in $\pi$-KRVI. Recall that each hypercube is partitioned when $\rho_{\Zc'}^{-\alpha} < N_{h,\Zc'}^t+1$, ensuring that $N_{h,\Zc'}^t$ remains at most $\rho_{\Zc'}^{-\alpha}$. 

For the covering number, with the choice of $\epsilon_h^t=\frac{H\sqrt{\log(\frac{TH}{\delta})}}{\sqrt{N^t_{h,\Zc'}}}$, we have
\begin{eqnarray}\nn
    &&\hspace{-3em}\log \Nc_{k,h}(\epsilon_h^t; R_N, \beta_h^t(\delta, \epsilon_h^t))\\\nn
    &=& \Oc\left( \left(\frac{R_N^2\rho_{\Zc'}^{\alpha}}{(\epsilon_{h}^t)^2}\right)^{\frac{1}{\pt-1}} (1+\log(\frac{R_N}{\epsilon_h^t})) + 
    \left(\frac{(\beta_h^t(\delta,\epsilon_h^t))^2\rho_{\Zc'}^{\alpha}}{(\epsilon_{h}^t)^2}\right)^{\frac{2}{\pt-1}} (1+\log(\frac{\beta_h^t(\delta,\epsilon_h^t)}{\epsilon_h^t}))\right)
    \\\nn
&=& 
\Oc\left( \left(\frac{R_N^2}{H^2\log(\frac{HT}{\delta})}\right)^{\frac{1}{\pt-1}} (1+\log(\frac{R_N}{\epsilon_h^t})) + 
    \left(\frac{(\beta_h^t(\delta,\epsilon_h^t))^2}{H^2\log(\frac{HT}{\delta})}\right)^{\frac{2}{\pt-1}} (1+\log(\frac{\beta_h^t(\delta,\epsilon_h^t)}{\epsilon_h^t}))\right).
    \\\nn
\end{eqnarray}

We thus see that the choice of $\beta_h^t(\delta, \epsilon_h^t)=\Theta(H\sqrt{\log(\frac{TH}{\delta})})$ satisfies the requirement for confidence interval width on $\Zc'$ based on Theorem~\ref{thm:con_int}. We now use probability union bound over all $\Zc'\in\Uc_h^T$ to obtain 
\begin{equation}
\beta_T(\delta) = \Theta(H\sqrt{\log(\frac{TH|H \Uc_h^T|}{\delta})}) = \Theta(H\sqrt{\log(\frac{TH}{\delta}}).
\end{equation}
For this value of $\beta_T(\delta)$, we have with probability at least $1-\delta$, for all $h\in[H], t\in[T], z\in\Zc$, 
\begin{equation}
    \big|r_h(z)+[P_hV_{h+1}](z)-\Qhat_h^t(z)\big|\le \beta_{T}(\delta)b_h^t(z)+\epsilon_{h}^t,
\end{equation}
where in the above expression $\epsilon_h^t$ is the parameter of the covering number corresponding to $\Zc'$ when $z\in\Zc'$. 

Therefore, we have, with probability at least $1-\delta$

\begin{eqnarray}
\text{Term}~(i) \le \sum_{t=1}^T\sum_{h=1}^H-\delta_h^t(z_h^t)\le 2\beta_T(\delta)\left( \sum_{t=1}^T\sum_{h=1}^Hb_h^t(z_h^t) \right) +2\epsilon_{h}^t,
\end{eqnarray}

with
\begin{eqnarray}
    \epsilon_h^t = \frac{H\sqrt{\log(\frac{TH}{\delta})}}{\sqrt{N_{h,\Zc(z_h^t)}^t}}.
\end{eqnarray}

We bound the total uncertainty in the kernel ridge regression measured by $\sum_{t=1}^T\left(b_h^t(z_h^t)\right)^2$
\begin{eqnarray} \nn       \sum_{t=1}^T\left(b_h^t(z_h^t)\right)^2 &=& \sum_{\Zc'\in \Uc_h^T}\sum_{z_h^t\in\Zc'} \left(b_h^t(z_h^t)\right)^2\\\nn
&\le& \sum_{\Zc'\in \Uc_h^T} \frac{2}{\log(1+1/\lambda^2)}\Gamma_{k,\lambda}(N_{h,\Zc'}^T)\\\nn
&=& \Oc\left(\sum_{\Zc'\in \Uc_h^T} \log(T)\right)\\\nn
&=& \Oc\left(|\Uc_h^T|\log(T)\right)\\\nn
&=& \Oc\left( T^{\frac{d}{d+\alpha}}\log(T)\right).
\end{eqnarray}

The first inequality is commonly used in kernelized bandits. For example see~\cite[][Lemma~$5.4$]{srinivas2010}. The third and fifth lines follow from  
Equation~\eqref{eq:logtb} and Lemma~\ref{lem:UT}, respectively.
Also, we have
\begin{eqnarray}
    \sum_{t=1}^T (\epsilon_h^t)^2 &=& \sum_{t=1}^T \frac{H^2\log(\frac{TH}{\delta})} {N_{h,\Zc(z_h^t)}^t} \\\nn
    &\le& \sum_{\Zc'\in \Uc_h^T}\sum_{z_h^t\in\Zc'} \frac{H^2\log(\frac{TH}{\delta})} {N_{h,\Zc'}^t}\\\nn
    &\le& |\Uc_h^T|H^2\log(\frac{TH}{\delta})(\log(T)+1) \\\nn
    &\le& \Oc\left( H^2T^{\frac{d}{d+\alpha}}\log(\frac{TH}{\delta})\log(T)\right).
\end{eqnarray}

We are now ready to bound the 
\begin{eqnarray}
\text{Term}~(i) &\le& 2\beta_T(\delta)\left( \sum_{t=1}^T\sum_{h=1}^Hb_h^t(z_h^t) \right) + 2\sum_{t=1}^T\sum_{h=1}^H\epsilon_h^t\\\nn
&\le& 2\beta_T(\delta)\sqrt{T}\sum_{h=1}^H\sqrt{\sum_{t=1}^T(b_h^t(z_h^t))^2} +2\sqrt{T}\sum_{h=1}^H\sqrt{\sum_{t=1}^T(\epsilon_h^t)^2}\\\nn
&=&\Oc\left(H^2T^{\frac{d+\alpha\mathbin{/}2}{d+\alpha}}\sqrt{\log(T)\log(\frac{TH}{\delta})}\right).
\end{eqnarray}
The proof is completed.

\end{document}